\newtheorem{theorem}{Theorem}[section]
\newtheorem{lemma}[theorem]{Lemma}
\DeclareMathOperator{\Tr}{Tr}
\definecolor{awesome}{rgb}{1.0, 0.13, 0.32}
\title{Generative Models for Anomaly Detection and Design-Space Dimensionality Reduction in Shape Optimization}
\author{{Danny D'Agostino\hspace{0.01mm}
\orcidlink{https://orcid.org/0000-0003-0028-4720}}\\
Department of Industrial Systems Engineering and Management \\
	National University of Singapore\\
	Singapore \\
	\texttt{dannydag@nus.edu.sg} \\
}
\begin{document}
%
\begin{abstract}
Our work presents a novel approach to shape optimization, with the twofold objective to improve the efficiency of global optimization algorithms while promoting the generation of high-quality designs during the optimization process free of geometrical anomalies. 
This is accomplished by reducing the number of the original design variables defining a new reduced subspace where the geometrical variance is maximized and modeling the underlying generative process of the data via probabilistic linear latent variable models such as factor analysis and probabilistic principal component analysis. We show that the data follows approximately a Gaussian distribution when the shape modification method is linear and the design variables are sampled uniformly at random, due to the direct application of the central limit theorem.
The degree of anomalousness is measured in terms of Mahalanobis distance, and the paper demonstrates that abnormal designs tend to exhibit a high value of this metric. 
This enables the definition of a new optimization model where anomalous geometries are penalized and consequently avoided during the optimization loop.
The procedure is demonstrated for hull shape optimization of the DTMB 5415 model, extensively used as an international benchmark for shape optimization problems. The global optimization routine is carried out using Bayesian optimization and the DIRECT algorithm.
From the numerical results, the new framework improves the convergence of global optimization algorithms, while only designs with high-quality geometrical features are generated through the optimization routine thereby avoiding the wastage of precious computationally expensive simulations.
\end{abstract}
\maketitle
%
\begin{keyword}Dimensionality Reduction \sep%
    Uncertainty Quantification \sep Shape Optimization \sep Anomaly Detection \sep Unsupervised Learning
\end{keyword}
\section{Introduction}
In the last decades, we obtained incredible progress in computational power and mathematical tools, but engineering design processes have also reached a high level of complexity that makes the design optimization process still challenging. This is especially prominent in simulation-based design optimization (SBDO) when time-consuming high-fidelity simulators are considered. 
Furthermore, one of the most complex challenges is how to deal with high-dimensional large design spaces,  when computationally-expensive black-box functions are used for the performance analysis and a global optimum is sought after. 
Even if efficient global optimization algorithms have been proposed \cite{jones1993-JOTA, kennedy1995particle, mockus1978application} and applied with success to SBDO, finding a potentially global optimal solution within reasonable computational time/cost remains a critical issue and a technological challenge.

For those reasons, proposing methodologies capable of weakening the computational complexity of the overall optimization process is still crucial, and recently, has been exploring the possibility of reducing the number of design variables of the optimization problem, through dimensionality reduction machine learning models such as principal component analysis (PCA) \cite{hotelling1933analysis,pearson1901liii} to alleviate the well-known "curse of dimensionality" \cite{bellman1966dynamic}. 

Beyond dimensionality reduction, generative models have gained prominence in the domain of engineering design optimization. Generative models learn the joint probability distribution function (PDF) of the observed data $p(\mathbf{x})$ \cite{goodfellow2016deep, bishop2006-PRM, gelman2013bayesian}. There are several different kinds of tasks for which we can use generative models such as generating synthetic data, density estimation, and data imputation \cite{pml2Book}. In the context of our work, both dimensionality reduction and density estimation play a central role where the latter refers to evaluating the probability of an observed data vector, (i.e. computing $p(\mathbf{x})$). Some of these models can directly describe the probability distribution in a closed form such as probabilistic PCA (PPCA) \cite{roweis1997algorithms, tipping1999probabilistic} and factor analysis (FA) \cite{bartholomew1995spearman, spearman1961general} so that $p(\mathbf{x})$ can be evaluated explicitly. Other models such as generative adversarial networks (GANs) \cite{goodfellow2020generative} and variational autoencoders (VAEs)\cite{kingma2013auto} do not allow direct computation of $p(\mathbf{x})$ but allow actions that implicitly require knowledge of it (e.g. generating synthetic samples). 

Once the density estimation task is completed, is possible to use the trained generative model for \textit{anomaly detection}.
Anomaly detection is the process of identifying data points or patterns that do not conform to the expected or normal behavior within a dataset \cite{chandola2009anomaly}. These anomalies can manifest as unusual values, patterns, or events that differ significantly from the majority of the data points. 
The core principle of density estimation for anomaly detection is simple: anomalies are instances residing in regions of low data density, as they are less likely to conform to the patterns established by the learned data distribution. 
This process plays a crucial role in various domains, including finance \cite{ahmed2016survey}, cybersecurity \cite{bhuyan2013network}, industrial quality control \cite{QUATRINI2020117}, and healthcare \cite{ukil2016iot}, where the detection of unusual events can have critical implications. 
In engineering design optimization, the ability to identify such anomalies is of primary importance as it helps engineers avoid exploring impractical or suboptimal design solutions, especially in the presence of expensive computer simulations.

\section{Related Work}
In the context of shape optimization, research has traditionally focused on the shape and topology parameterization, as critical factors to achieve the desired level of design variability \cite{rozvany1992-SO,bletzinger1997-EO,samareh2001-AIAA}.
The choice of the shape parameterization technique has a large impact on the practical implementation and the success of the optimization process. Shape deformation methods have been an area of continuous and extensive research within the fields of computer graphics and geometry modeling. Consequently, many techniques have been proposed in recent years \cite{sieger2015-NCGGASC}. Several techniques have been developed and applied \cite{samareh2001-AIAA}, such as: basis vector methods \cite{pickett1973-AIAA}, domain element and discrete approaches \cite{leiva1998-AIAA}, partial differential equation \cite{bloor1995-JA}, CAD-based \cite{lacourse1995-HSM}, analytical \cite{hicks1978-AIAA}, polynomials \cite{haftka1986-CMAME} and the popular free-form deformation (FFD) \cite{sederberg1986-ACM}. 

For the SBDO to alleviate the curse of dimensionality and be successful, the parameterization method must efficiently describe the design variability with as few variables as possible.
A linear dimensionality reduction model based on the PCA (also known as proper orthogonal decomposition (POD) \cite{berkooz1993proper} or Karhunen Loeve expansion (KLE) \cite{karhunen1950struktur}) has been applied for learning reduced-dimensionality representations of the original shape parametrization. 
A framework based on the KLE and POD has been formulated in \cite{ghoman2012pod}, and similarly in \cite{diez2015-CMAME} for the assessment of the shape modification variability and the definition of a reduced-dimensionality global model of the shape modification vector, for arbitrary modification methods. 

KLE/PCA methods have been successfully applied in the case of the shape optimization of a high-speed Delft catamaran optimized with deterministic particle swarm optimization in \cite{chen2015-EO}, for single and multidisciplinary design optimization of a destroyer ship and a 3D hydrofoil in \cite{diez2016-AIAA}, and for the reparametrization of a destroyer ship using different shape modifications methods in \cite{dagostino2020design}. In \cite{marino2018shape} they coupled the POD and the dynamic mode decomposition \cite{schmid2010dynamic} to speed up the hull form optimization of a cruise ship. In  \cite{diez2015-FAST_SO,diez2018-SMO} they used stochastic optimization in the case of mono-hulls and catamarans in calm water and waves, respectively. Similarly, in \cite{poole2017-CF} have applied POD to airfoil shape optimization via singular value decomposition (SVD) of an airfoil geometric-data library.
Optimization methods enhanced by POD are proposed for aerodynamic shape optimization also in \cite{zhang2022two} and \cite{cinquegrana2018investigation}.

Those methods improve shape optimization efficiency by reparametrization and dimensionality reduction, providing the assessment of the design space and the shape parametrization before optimization and/or performance analysis are carried out. The assessment is based on the geometric variability associated with the design space, making the method computationally very efficient and attractive, as no simulations are required. 
The methodology has been extended allowing effective dimensionality reduction in presence of high nonlinearities due to complex geometrical scaling procedures in \cite{dagostino2017-MOD,dagostino2018-AIAA-MAO,dagostino2018-AIAASciTech} and for a physics-informed formulation in \cite{dagostino2018-LOD,serani2018-SNH,serani2020assessing}. 

Recently generative models have been used in the context of engineering design.
Most of the related work in this domain involves the usage of generative deep learning models as in \cite{chen2021padgan} where they developed a Performance-Augmented Diverse Generative Adversarial Network (PaDGAN) for automatic design synthesis and design space exploration using three synthetic examples and one real-world airfoil design example. In \cite{KHAN2023116051}, they introduced ShipHullGAN trained on a dataset of physically validated designs from a wide range of existing ship types, including container ships, tankers, bulk carriers, tugboats, and crew supply vessels.
In \cite{chen2020airfoil} they improved the airfoil optimization process by developing a Bézier-GAN, parametrizing aerodynamic designs by learning from shape variations in an existing dataset. It is noteworthy that the Bézier-GAN achieved comparable performance in compression capability and reconstruction error to linear PCA, indicating that the underlying probability distribution of the data may be relatively simple, such as a Gaussian. 

\section{Main Contribution}
Our research reveals that traditional PCA techniques employed in prior studies like those discussed in \cite{ghoman2012pod} and \cite{diez2015-CMAME}, may generate geometrically feasible shape modifications for optimization, but have anomalous geometric features such as sharp angles and deflections with the considerable disadvantage of wasting expensive computer simulations when performed with respect to those suboptimal geometries.

To solve this drawback, we propose a new optimization model where anomalous geometries produced from the reduced representation (or \text{latent space}) are penalized and consequently avoided during the optimization loop. In this setting, we propose a new SBDO framework based on probabilistic linear latent variable models such as FA 
and PPCA for dimensionality reduction and density estimation. 

The main assumption in this paper is that the data is generated by a Gaussian distribution since the two models are part of the linear Gaussian models family \cite{roweis1999unifying}.
The Gaussian assumption is satisfied when the geometrical modification is linear with respect to the design variables such in Free Form Deformation \cite{sederberg1986-ACM}, NURBS \cite{piegl1987curve,piegl1996nurbs}, radial basis function \cite{sieger2014rbf} and PARSEC \cite{sobieczky1999parametric}. 
In this case, the geometries collected in the dataset are generated by a linear combination of uniformly distributed design variables so that the original design space PDF, follows approximately a Gaussian distribution as a simple and direct application of the central limit theorem (CLT).

Once the parameters of the Gaussian distribution are estimated and the reduced subspace assessed through PPCA/FA, we can compute the anomaly score with respect to any new geometry generated from the latent space. In particular, we represent the anomaly score using the Mahalanobis distance \cite{mahalanobis1936generalized}.
In our new framework, once a geometry is generated from the latent space during the optimization, we retrieve its Mahalanobis distance. A high Mahalanobis distance indicates that the current geometry might be anomalous or not belong to the original parametrization and hence penalized from the new optimization model.

To assess the performance of this approach, two global optimization algorithms namely DIRECT \cite{jones1993-JOTA} and Bayesian optimization \cite{mockus1978application} based on Gaussian process regression \cite{williams2006gaussian} with lower confidence bound \cite{cox1992statistical} as acquisition function (GP-LCB) are applied for a hull form shape optimization of the DTMB 5415 model an early and open to a public version of the USS Arleigh Burke destroyer DDG 51, extensively used as an international benchmark for shape optimization problems (e.g., \cite{diez2018-AVT204-3,diez2018-AVT252-7}). 
From the numerical results, our framework facilitates the convergence of global optimization algorithms towards better minima, and at the same time, anomalous geometries are avoided and penalized during the optimization loop so that precious computationally expensive simulations are not wasted.

The paper is structured in the following manner: in section \ref{sec:4}, we introduce the three primary components of the SBDO framework, namely, the optimization model, shape parametrization method, and physics simulator. Moving on to section \ref{sec:5}, we delve into the dataset generation and application of dimensionality reduction using PCA. 
In section \ref{sec:6}, we present our framework based on generative models including PPCA and FA for density estimation and dimensionality reduction. Furthermore, we present a novel anomaly-aware optimization model, highlighting the statistical properties of the design space and providing a geometrical interpretation of our framework.  
In section \ref{sec:7}, we apply our methodology for hull form shape optimization comparing it to classical PCA. Finally, we summarize our findings and potential future work in section \ref{sec:8}.
\section{The Simulation-Based Design Optimization Framework}\label{sec:4}
\begin{figure}[htb!]\label{fig:sbdo1}
	\centering
	\includegraphics[width=0.5\textwidth]{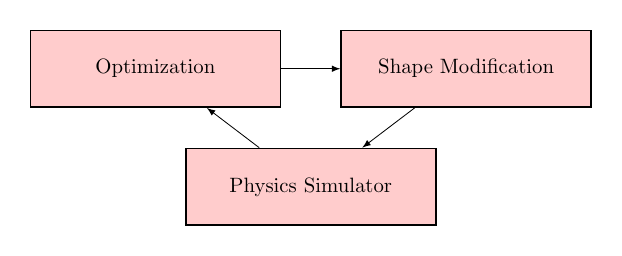}
	\caption{Scheme for the SBDO framework.}\label{fig:SBDOframe}
\end{figure}
In the following sections, we'll describe the main components of the SBDO framework as shown in Fig. \ref{fig:SBDOframe}. 
\subsection{The Optimization Problem}
The first block of the SBDO framework is the optimization process. Generally, the underlying optimization problem that we want to solve, given the design variable $\mathbf{v}\in \mathbb{R}^M$ is the following
\begin{alignat}{3}
	\min_{\mathbf{v}}  & \quad           f(\mathbf{x}(\mathbf{v})) \label{eq:opt_sbdo1} \\ 
	&  \quad g_j(\mathbf{x}(\mathbf{v}))  \leq a_j \label{eq:opt_sbdo2} &\quad  j=1,\dots,J \\
	&\quad  v_{m}^{\text{lb}} \leq  v_{m} \leq  v_{m}^{\text{ub}} \label{eq:opt_sbdo3}&\qquad  m = 1,\dots, M    
\end{alignat}
here the objective function $f : \mathbb{R}^D \rightarrow \mathbb{R}$ in Eq. \ref{eq:opt_sbdo1} represents the quantity that we want to minimize. With the vector $\mathbf{x}(\mathbf{v})$ we represent a geometry $\mathbf{x} \in \mathbb{R}^D$ where its modification depends on the design variable vector $\mathbf{v}$. The optimization is performed with respect to the design variable  $\mathbf{v}$ since it's responsible for the geometrical modification. The function evaluation is usually performed by a physics simulator.
In this context, black box optimization problems are solved with global optimization algorithms and especially when the gradient is supposed to be noisy, a derivative-free approach is more suitable. 

Eq. \ref{eq:opt_sbdo2} and Eq. \ref{eq:opt_sbdo3} represent the constraints. The function $g(\mathbf{x}(\mathbf{v}))$ represents a geometrical constraint that forces the optimizer to produce admissible shape modifications. The geometrical constraints are evaluated before the function evaluation is performed. Constraints of this kind are called \textit{hidden constraints} because they are not specified to the simulator \cite{digabel2015taxonomy}.

Finally, we have to define the upper bound $v_{m}^{\text{lb}}$ and the lower bound $v_{m}^{\text{ub}}$ for each $m$th component of the design variable $\mathbf{v}$. The bounds are very important for the overall SBDO process because an eventually large value of the lower and upper bounds allows the optimizer to search on a larger space, making the optimization more challenging with the possibility of violating many times the geometrical constraints. On the other side, with a larger design space, we could obtain more different configurations of the design variables with the chance to produce a better improvement in the objective function.
\subsection{Shape Modification: The Free Form Deformation Method}
The second block of the SBDO framework is shape parameterization. This comes after the optimization block because the shape parameterization method will receive the design variable vector $\mathbf{v}$ from the optimizer and will produce the relative shape modification to the geometry $\mathbf{x}(\mathbf{v})$. 
The choice of the shape parameterization technique has a large impact on the practical implementation and the success of the optimization process. 

Here, we show one of the most popular methods for shape modifications, namely the FFD. 
The idea is to embed an object within a trapezoidal (or other topology) lattice and modify the object within the trapezoid as the lattice is modified. A local coordinate system is assumed, with origin $\mathbf{\mathsf{r}}_0 \in \mathbb{R}^3$ with $\mathbf{\mathsf{r}}_0 = (\mathsf{x}_0, \mathsf{y}_0, \mathsf{z}_0)$ at one of the trapezoid vertices. Any point within the trapezoid has $\alpha$, $\beta$, and $\gamma$ coordinates such that
\begin{equation}\label{eq:localffd}
	\mathbf{\mathsf{r}} = \mathbf{\mathsf{r}}_0 + \alpha\hat{\mathbf{T}}_1 + \beta\hat{\mathbf{T}}_2 + \gamma\hat{\mathbf{T}}_3
\end{equation}
with $\alpha$, $\beta$, and $\gamma$ bounded by $[0,1]$ and given by
\begin{equation}\label{eq:localcoord1}\nonumber
	\alpha = \frac{\hat{\mathbf{T}}_2\times\hat{\mathbf{T}}_3 \cdot(\mathbf{\mathsf{r}} - \mathbf{\mathsf{r}}_0 )}{\hat{\mathbf{T}}_2\times\hat{\mathbf{T}}_3\cdot\hat{\mathbf{T}}_1}, 
\end{equation}
\begin{equation}\label{eq:localcoord2}\nonumber
	\beta  = \frac{\hat{\mathbf{T}}_1\times\hat{\mathbf{T}}_3 \cdot(\mathbf{\mathsf{r}} - \mathbf{\mathsf{r}}_0 )}{\hat{\mathbf{T}}_1\times\hat{\mathbf{T}}_3\cdot\hat{\mathbf{T}}_2},
\end{equation}
\begin{equation}\label{eq:localcoord3}
	\gamma = \frac{\hat{\mathbf{T}}_1\times\hat{\mathbf{T}}_2 \cdot(\mathbf{\mathsf{r}} - \mathbf{\mathsf{r}}_0 )}{\hat{\mathbf{T}}_1\times\hat{\mathbf{T}}_2\cdot\hat{\mathbf{T}}_3} \,\,\,\,\,
\end{equation}
Control points (CPs) $\mathbf{c}_{ijk} \in \mathbb{R}^{3}$ are defined as lattice nodes. The number of CPs used in  $\hat{\mathbf{T}}_1$, $\hat{\mathbf{T}}_2$, and $\hat{\mathbf{T}}_3$ directions are $t_1$, $t_2$, and $t_3$, respectively with a total number of CPs equals to $t_{\text{tot}} = t_1 + t_2+ t_3$. The coordinates of modified CPs depend on the imposed original-lattice nodes and to perform a modification to the geometry the coordinates of the CPs are perturbed by the relative \textit{design variable} vector $\mathbf{v}_{ijk} \in \mathbb{R}^{3}$, as
\begin{equation}\label{eq:design_ffd}
	c_{ijk}(\mathbf{v}_{ijk}) =  \mathbf{\mathsf{r}}_0  + \frac{i}{t_1}\hat{\mathbf{T}}_1 + \frac{j}{t_2}\hat{\mathbf{T}}_2 + \frac{k}{t_3}\hat{\mathbf{T}}_3 + v_{ijk}
\end{equation}
The shape modification is achieved by interpolating the CPs' modification over the embedding space. The interpolation can be performed using different polynomial bases. Herein, a tensor product of trivariate Bernstein polynomial is used \cite{sederberg1986-ACM}, where the discretized geometry $\mathbf{x(\mathbf{v}})$ is given by
\begin{equation}\label{eq:ffd_bern}
	\begin{split}
		\mathbf{x}(\mathbf{v}) &= \mathbf{g}_0 + \sum_{i=0}^{t_1} \sum_{j=0}^{t_2} \sum_{k=0}^{t_3}b_{i, t_1}(\alpha)b_{j, t_2}(\beta)b_{k, t_3}(\gamma)c_{ijk}(v_{ijk})\\
		&= \mathbf{g}_0 + \sum_{i=0}^{t_1} \sum_{j=0}^{t_2} \sum_{k=0}^{t_3}c_{ijk}(v_{ijk}) B_{i,j,k}(\alpha, \beta, \gamma)
	\end{split}
\end{equation}
where $\mathbf{x(\mathbf{v})}$ is a vector containing the Cartesian coordinates of the displaced points, and $\mathbf{g}_0$ represents the original geometry.
The generic Bernstein basis polynomials are defined as
\begin{equation}
	b_{v,r}(\chi)=\binom{r}{v}\chi^v(1-\chi)^{r-v}
\end{equation}
Note that from now on the design variable vector $\mathbf{v}$ is considered as a $M$-dimensional vector where $M = 3t_{\text{tot}}$ and the geometry $\mathbf{x(\mathbf{v})}$ as a $D$-dimensional vector. 
\subsection{Physical Solver}
In general, a physical solver is a computer program that provides an approximation of the behavior of a physical system determined by its governing equations and boundary conditions. Computer simulations are crucial nowadays for scientific research in many domains, from fluid dynamics to material science, chemistry, and biology. Especially when direct experimentation is too economically expensive, dangerous, or even impossible to perform. In fluid dynamics, the governing equations are given by the Navier-Stokes equations that we briefly introduce in this section.
Without considering the energy in the system, the continuity equations and the Cauchy momentum equation are
\begin{equation}
	\frac{\partial \rho}{\partial t} + \nabla \cdot (\rho \mathbf{\mathsf{u}}) = 0 
\end{equation}
\begin{equation}
	\rho\frac{D \mathsf{u}}{D t} = \nabla \cdot \mathbf{Y} + \rho \mathbf{f}
\end{equation}
where $\mathsf{u}$ is the velocity vector, $\rho$ is the density, $\mathbf{Y}$ is the second order Cauchy stress tensor, and $\rho \mathbf{f}$ is the volume forces vector. With $\mathbf{Y} =  p \mathbf{I} + \tau$, for a Newtonian and incompressible fluid the Navier-Stokes equation is given by
\begin{equation}
	\rho \frac{\partial \mathbf{\mathsf{u}}}{\partial t} + \rho (\mathbf{\mathsf{u}} \cdot \nabla) \mathbf{\mathsf{u}} = \nabla \cdot [- p \mathbf{I}+ \eta [\nabla \mathbf{\mathsf{u}} + (\nabla \mathbf{\mathsf{u}})^\top]] + \rho \mathbf{f} 
\end{equation}
assuming that the stress tensor is a linear function of the
strain tensor, the fluid is isotropic, $\nabla \cdot \tau = 0$ for a fluid at rest, and $\eta$ and $p$ are the dynamic viscosity of the fluid and the pressure respectively.

The Navier-Stokes completely describe the dynamics of a fluid as the turbulence described by the nonlinear term. In many real-world applications,
the turbulence effects must be considered to provide an accurate description of the physical process. Still, nowadays, the computational effort required for the direct numerical simulation (DNS) of the Navier-Stokes equations is intractable. Most of the time simplified numerical models are considered such as the large eddy simulation (LES), Reynolds-averaged Navier-Stokes equations (RANS), and potential flow simulators.
\section{Design-Space Dimensionality Reduction for SBDO}\label{sec:5}
The curse of dimensionality is one of the major drawbacks when global optimization algorithms are employed. In the following sections, we show how to reduce the dimensionality of the design vector $\mathbf{v} \in  \mathbb{R}^M$ by learning a new parameterization for shape modification. More precisely the intent is to perform the optimization with respect to a new design variable vector $\mathbf{z} \in \mathbb{R}^{K}$ with $K < M$ to improve the convergence speed of the global optimization routine to an optimal solution. 

The process is highlighted in Fig. \ref{fig:sbdo_dr}. The first phase is to generate the dataset $\mathbf{X}$ and this is done by randomly sampling the design variable vector $\mathbf{v}$ from a uniform distribution. As support for the uniform distribution, we use the upper bounds and lower bounds of each design variable component. The second phase is to apply a dimensionality reduction model which learns the new parameterization given by the new design variable vector $\mathbf{z}$ and a matrix $\mathbf{U}$. Here the matrix $\mathbf{U}$ is responsible for transforming (or decoding) the design variable $\mathbf{z}$ in a new geometry $\mathbf{x(\mathbf{z})}$. The matrix $\mathbf{U}$ replaces the shape parameterization method (e.g. the FFD method) during the SBDO loop.  
\begin{figure}[htb!]
	\centering
	\includegraphics[width=0.8\textwidth]{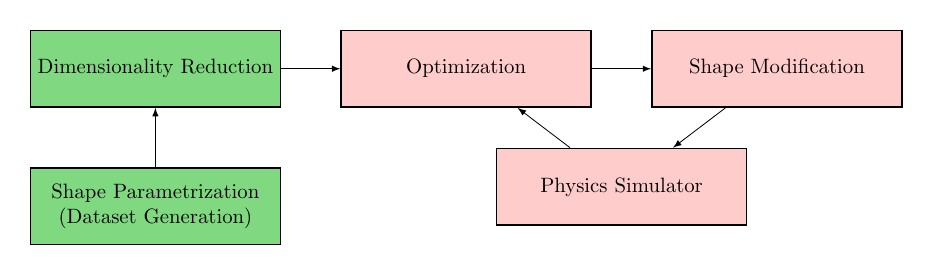}
	\caption{SBDO framework with a design-space dimensionality reduction phase.}\label{fig:sbdo_dr}
\end{figure}
\subsection{Dataset Generation}\label{sec:genFFD}
The first step in the DR-SBDO is to generate the dataset used in the next block to train the dimensionality reduction model. This can be simply achieved by sampling from a uniform distribution the full dimensional design variable vector 
\begin{equation}
 v_m \sim \mathcal{U}(v_{m}^{\text{lb}}, v_{m}^{\text{ub}}) \qquad  m = 1, \dots, M 
\end{equation}
and using the sampled vector as input for Eq. \ref{eq:ffd_bern} to produce a random geometry. We repeat the process $N$ times so we can collect all the geometries inside a dataset $\mathbf{X}$ of size ($N\times D$), where $D$ is given by the product between the number of the grids points $L$ and the number of Cartesian coordinate ($\mathsf{x}, \mathsf{y}, \mathsf{z}$) considered in the application.

In this step, it is crucial that the designer carefully selects the value for the upper and lower bounds of the design variables. The first reason is that a restricted range for the design variables will not allow diversity but redundancy (i.e. low variance) in the generated dataset. A large range for the design variables might produce, also depending on the shape parametrization method, many geometrically anomalous geometries and furthermore without satisfying the geometrical constraints.
Once the dataset is generated, we are ready to perform the dimensionality reduction step.
\subsection{Principal component analysis}
Principal component analysis (PCA) has become during the last decades one of the most famous statistical tools for feature extraction and dimensionality reduction. It appeared for the first time in a paper \cite{pearson1901liii} published by Karl Pearson in 1901 describing the model from a geometrical perspective and then independently developed from a statistical point of view by Harold Hotelling in the early '30's \cite{hotelling1933analysis}. 

The PCA finds a new optimal basis of dimension $K$ such that the variance of the projected points is maximized and the mean squared error between the original data and their projection is minimized. The first step in the PCA is the computation of the sample mean vector $\bar{\mathbf{x}} = \frac{1}{N}\sum_{n=1}^{N} \mathbf{x}_n $ and the computation of the sample covariance matrix $\mathbf{S} = \frac{1}{N}\sum_{n=1}^{N} (\mathbf{x}_n - \bar{\mathbf{x}})(\mathbf{x}_n - \bar{\mathbf{x}})^\top$. The second step involves an eigendecomposition of the matrix $\mathbf{S}$ 
\begin{equation}\label{eq:eigproblem_pca}
	\mathbf{S}\mathbf{u} = \mathbf{\lambda}\mathbf{u}
\end{equation}
of size ($D\times D$). To reduce the dimensionality of our dataset $\mathbf{X}$ we should choose a subset of $K$ eigenvectors which corresponds to the top-$K$ largest variance eigenvalues. Then we can project a data vector $\mathbf{x}$ into the subspace defined by top-$K$ orthonormal eigenvectors $\mathbf{U}$ (of size $D\times K$) also called \textit{principal components} of the sample covariance matrix $\mathbf{S}$
\begin{equation}\label{eq:proj_pca}
	\mathbf{z}_{n} = \mathbf{U}^{\top}(\mathbf{x}_{n} - \bar{\mathbf{x}})
\end{equation}
the vector $\mathbf{z}_{n}$ of size $K$ represents the reduced representation of the original data $\mathbf{x}_{n}$. Usually the random variable $\mathbf{z}$ 
is called \textit{latent variable} because they are not directly observed but they explain or extract some useful patterns of the observed data $\mathbf{x}$. The matrix $\mathbf{Z}$ of size ($N\times K$) collects all the latent variables $\mathbf{z}_{n}$.
In case is needed, the reconstruction of the relative data point can be obtained by projecting back to data space $\mathbb{R}^{D}$ as follows 
\begin{equation}\label{eq:proj_back_pca}
	\tilde{\mathbf{x}}_{n} = \mathbf{U}\mathbf{z}_{n} + \bar{\mathbf{x}} = \mathbf{U}\mathbf{U}^{\top}\mathbf{x}_{n}  + \bar{\mathbf{x}} = \mathbf{P}\mathbf{x}_{n}  + \bar{\mathbf{x}}
\end{equation}
where $\mathbf{P} = \mathbf{U}\mathbf{U}^{\top}$ is the symmetric projection matrix for the subspace spanned by the first top-$K$ eigenvectors $\mathbf{U}$.

Practical implementations of the PCA perform the singular value decomposition \cite{golub1970-NM} of the data matrix $\mathbf{X} = \mathbf{V}\boldsymbol{\Xi}\mathbf{U}^{\top}$, where $\mathbf{V}$, $\boldsymbol{\Xi}$ and $\mathbf{U}$ are ($N\times K$), ($K\times K$) and ($K\times D$) respectively, with $K \leq \min(N, D)$.  In this case $\mathbf{X}$ must be centered before the SVD computation to obtain the correct results. The sample covariance matrix in this setting is given by $\mathbf{S} =\frac{1}{N} \mathbf{X}^{\top}\mathbf{X}$ and consequently
\begin{equation}\label{eq:svd_cov}
	\mathbf{S} =\frac{1}{N} \mathbf{X}^{\top}\mathbf{X} = \frac{1}{N} \mathbf{U}\boldsymbol{\Xi}\mathbf{V}^{\top}\mathbf{V}\boldsymbol{\Xi}\mathbf{U}^{\top} = \frac{1}{N} \mathbf{U}\boldsymbol{\Xi}^{2}\mathbf{U}^{\top} = \mathbf{U}\boldsymbol{\Lambda}\mathbf{U}^{\top}
\end{equation}
the eigenvalues of the covariance matrix are given by $\lambda_i = \frac{\xi^{2}_i}{N}$, where $\xi_i$ represents the $i$th element in the diagonal of the matrix $\boldsymbol{\Xi}$.

A simple and widely used criterion for choosing the number of components $K$ to retain is given by computing the fraction explained variance with respect to the total variance given by 
\begin{equation}
\frac{\sum_{k = 1}^{K}\lambda_k}{\Tr{(S)}}    
\end{equation}
and setting $K$ to the desired level.
\subsection{Optimization in the Latent Space}
Once we trained the PCA model with respect to the dataset $\mathbf{X}$, we can formulate the new optimization model
\begin{alignat}{3}
	\min_{\mathbf{z}}   & \quad  f(\mathbf{x}(\mathbf{z})) \label{eq:ropt_sbdo1} \\ 
	&  \quad g_j(\mathbf{x}(\mathbf{z}))  \leq a_j \label{eq:ropt_sbdo2} &\quad  j=1,\dots,J \\
	&\quad  z_{k}^{\text{lb}} \leq  z_{k} \leq  z_{k}^{\text{ub}} \label{eq:ropt_sbdo3} &\qquad  k = 1,\dots, K
\end{alignat}
where the modification of the geometry $\mathbf{x}$ is performed by the variable $\mathbf{z} \in \mathbb{R}^{K}$ and the bounds are fixed taking the maximum and the minimum of each column component $z_{k}$ of $\mathbf{Z}$. 
Therefore, the optimizer iterates in the latent space using the variable $\mathbf{z}$ and the dimensionality of this subspace is smaller than the full dimensionality space defined by $\mathbf{v}\in \mathbb{R}^{M}$. This reduces the dimensionality of the optimization problem, which can improve the performance of the global optimization algorithm in finding the basin of attraction of the global minimum more efficiently.
\subsection{Decoding from the Latent Space}
After each iteration in the reduced dimensionality space, the optimizer outputs a new design variable $\mathbf{z}$ in the hope that it will lead to a significant improvement in the objective function $f(\mathbf{x}(\mathbf{z}))$. However, to calculate the objective function value, the design variable vector $\mathbf{z}$ must be transformed back into the data space, which requires projecting it onto the modified geometry $\mathbf{x}(\mathbf{z})$ using the principal components of the data $\mathbf{U}$. This projection can be easily accomplished through a matrix multiplication of $\mathbf{U}$ and $\mathbf{z}$, along with the addition of the mean vector $\mathbf{\bar{\mathbf{x}}}$. The resulting equation is
\begin{equation}
	\mathbf{x} = \mathbf{U}\mathbf{z} + \mathbf{\bar{\mathbf{x}}}
\end{equation}
where to maintain the notation uncluttered we suppose that $\mathbf{x}(\mathbf{z}) = \mathbf{x}$. The eigenvectors $\mathbf{U}$ represent spatial geometrical components and are responsible for decoding back the design variable $\mathbf{z}$ in the space $\mathbf{\mathbb{R}}^{D}$. 
Once we obtain the geometry from the latent space $\mathbf{x}$, this will enter in the physical solver that computes the objective function value as shown in Fig. \ref{fig:sbdo_dr}.
\section{A Novel SBDO Framework}\label{sec:6}
In this section, we introduce some important modifications to the framework described before. 
The SBDO framework with a dimensionality reduction phase performed before the optimization routine allows the optimizer to iterate in a subspace of lower dimensionality of $\mathbb{R}^{K}$ with respect to the full dimensionality space of $\mathbb{R}^{M}$ with $K<M$. An important property that could be highly desirable when performing the optimization in the latent space is that the shape modification should be \textit{invariant} respect the full dimensionality design space $\mathbb{R}^{M}$. This means that the design variable vector $\mathbf{z}$ should not produce geometries that the original full dimensional design variable vector $\mathbf{v}$ is not able to generate. 
In the next sections, we describe more in detail the problems and how to solve them using the new framework for shape optimization, precisely we discuss
\begin{itemize}
	\item In section \ref{sec:statsFFD} we show that in case the design variables are sampled from a uniform distribution, the geometries produced with the shape parameterization method (e.g. FFD) produce a design space that approximately follows a Gaussian distribution as a direct application of the central limit theorem (CLT).
	\item In section \ref{sec:FA} and \ref{sec:PPCA} we describe two probabilistic latent variable models, factor analysis and probabilistic PCA for dimensionality reduction and density estimation for the new SBDO framework. 
	\item  In section \ref{sec:mahopt}, we introduce a novel optimization model that incorporates anomaly detection through the computation of the Mahalanobis distance. We will also analyze the nature of the distribution of the Mahalanobis distance for normal random variables and illustrate the significance of the new optimization model by examining it from a geometrical perspective.
\end{itemize}
\subsection{Statistical Properties of the Shape Parametrization Method}\label{sec:statsFFD}
In section \ref{sec:genFFD}, we presented the FFD technique, which computes a geometry by evaluating the Eq. \ref{eq:ffd_bern}. This equation represents a linear combination of fixed Bernstein polynomials of the Cartesian coordinates, along with a coefficient (i.e. the design variable) that introduces a random perturbation to the geometry around those Cartesian coordinates.
Consequently in discrete form, Eq. \ref{eq:ffd_bern} represents a sum of random vectors $\mathbf{h} \in \mathbb{R}^D$ dependent of the design variable vector $\mathbf{v}$
\begin{equation}
	\mathbf{x(\mathbf{v})} = \sum_{m = 1}^{M} \mathbf{h}(v_m)
\end{equation}

Exploring the statistical properties of $\mathbf{x(\mathbf{v})}$ could be worthwhile, particularly in determining the probability distribution from which the data is generated.  
The central limit theorem (CLT) \cite{tchebycheff1889residus} shows that under certain conditions, the probability distribution of the sum of a large number of independently and identically distributed random variables converges to a Gaussian distribution. More formally
\begin{theorem}
	For a sequence of $D$ dimensional random vectors $\mathbf{h}(v_m)$ with finite mean and covariance $\boldsymbol{\mu}$ and $\boldsymbol{\Sigma}$ respectively, we have that
	\begin{equation}
		\sqrt{M}\left(\frac{1}{M} \sum_{m=1}^{M} \mathbf{h}(v_m) - \boldsymbol{\mu}\right)  \xrightarrow{d} \mathcal{N}(0, \boldsymbol{\Sigma})
	\end{equation}
	as $M \rightarrow \infty$.
\end{theorem}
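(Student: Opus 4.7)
The plan is to treat the statement as the standard multivariate central limit theorem and route the argument through the Cram\'er--Wold device, reducing the $D$-dimensional convergence to a family of scalar CLTs. Define $\mathbf{Y}_M := \sqrt{M}\bigl(M^{-1}\sum_{m=1}^M \mathbf{h}(v_m) - \boldsymbol{\mu}\bigr)$ and recall that $\mathbf{Y}_M \xrightarrow{d} \mathbf{Y}$ in $\mathbb{R}^D$ iff $\mathbf{t}^\top \mathbf{Y}_M \xrightarrow{d} \mathbf{t}^\top \mathbf{Y}$ for every fixed $\mathbf{t}\in\mathbb{R}^D$. So it will be enough to prove the one-dimensional claim.

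For fixed $\mathbf{t}$, I would set $Y_m := \mathbf{t}^\top\mathbf{h}(v_m) - \mathbf{t}^\top\boldsymbol{\mu}$. Because the design variables $v_m$ are sampled independently (cf.\ section \ref{sec:genFFD}), the $Y_m$ are independent, mean-zero, with variance $\mathbf{t}^\top \boldsymbol{\Sigma}\mathbf{t}$, and $\mathbf{t}^\top \mathbf{Y}_M = M^{-1/2}\sum_{m=1}^M Y_m$. Applying the classical Lindeberg--L\'evy CLT to the scalar sequence $\{Y_m\}$ yields $\mathbf{t}^\top \mathbf{Y}_M \xrightarrow{d} \mathcal{N}(0,\mathbf{t}^\top \boldsymbol{\Sigma}\mathbf{t})$, which is precisely the law of $\mathbf{t}^\top \mathbf{Z}$ for $\mathbf{Z}\sim\mathcal{N}(0,\boldsymbol{\Sigma})$; Cram\'er--Wold then closes the loop.

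Equivalently, and perhaps more transparently, I could proceed by characteristic functions: writing $\varphi_{\mathbf{Y}_M}(\mathbf{t}) = \varphi_{\mathbf{h}-\boldsymbol{\mu}}(\mathbf{t}/\sqrt{M})^M$, a second-order Taylor expansion around the origin gives $\varphi_{\mathbf{h}-\boldsymbol{\mu}}(\mathbf{t}/\sqrt{M}) = 1 - (2M)^{-1}\mathbf{t}^\top \boldsymbol{\Sigma}\mathbf{t} + o(1/M)$, hence $\varphi_{\mathbf{Y}_M}(\mathbf{t}) \to \exp\bigl(-\tfrac12\mathbf{t}^\top \boldsymbol{\Sigma}\mathbf{t}\bigr)$, and L\'evy's continuity theorem delivers convergence in distribution to $\mathcal{N}(0,\boldsymbol{\Sigma})$. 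Either route sidesteps any geometric subtlety of the shape parametrization and leans entirely on finiteness of the first two moments, which is free here since $v_m$ is uniform on a bounded interval.

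The real obstacle is not the CLT mechanics but an identifiability issue tacitly encoded in the statement. Inspection of Eq.~\ref{eq:ffd_bern} reveals that each design variable $v_m$ multiplies a distinct Bernstein basis polynomial evaluated at a fixed Cartesian location, so the summands $\mathbf{h}(v_m)$ are independent but \emph{not} identically distributed, and a single common $\boldsymbol{\mu}$, $\boldsymbol{\Sigma}$ is an oversimplification. To be rigorous, I would restate the result as a triangular-array statement and invoke the Lindeberg--Feller CLT in place of Lindeberg--L\'evy; the Lindeberg condition is easy to verify because $v_m$ is bounded, so all moments exist and the per-summand variances are uniformly controlled. I would flag this gap explicitly in the text and present the normalized-average version of $\boldsymbol{\mu}$ and $\boldsymbol{\Sigma}$ that makes the limit identifiable.
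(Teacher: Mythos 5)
Your proof is correct, but note that the paper does not actually prove this theorem at all: it simply defers to the literature with ``The proof is available for example in \cite{van2000asymptotic}.'' So there is no in-paper argument to compare against; what you have supplied is the standard proof of the stated i.i.d.\ multivariate CLT, and both of your routes (Cram\'er--Wold reduction to the scalar Lindeberg--L\'evy theorem, or directly via characteristic functions and L\'evy continuity) are sound and essentially equivalent in content. The more valuable part of your write-up is the final observation, which the paper glosses over: in Eq.~\ref{eq:ffd_bern} each design variable $v_m$ is paired with a \emph{different} fixed Bernstein-basis vector, so the summands $\mathbf{h}(v_m)$ are independent but not identically distributed, and the theorem as stated (with a single common $\boldsymbol{\mu}$ and $\boldsymbol{\Sigma}$) does not literally cover the application. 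Your suggested fix --- recasting the claim as a triangular-array statement and invoking Lindeberg--Feller, with the Lindeberg condition trivially verified because each $v_m$ is uniform on a bounded interval --- is exactly the right repair, and arguably what the paper should have stated; the practical conclusion (approximate Gaussianity of the generated geometries) survives unchanged, but the hypotheses under which it is justified are the ones you identify, not the ones printed in the theorem.
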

which implies that $\sum_{m=1}^{M} \mathbf{h}(v_m) \xrightarrow{d} \mathcal{N}(M\boldsymbol{\mu}, M\boldsymbol{\Sigma})$ for $M \rightarrow \infty$. The proof is available for example in \cite{van2000asymptotic}. 

Empirically, is well known that when a random variable $\mathbf{h}$ is generated by uniform distribution, the sum converges to a Gaussian distribution very fast (i.e. for $M << \infty$).
This result is significant because if the probability distribution of the data $\mathbf{X}$ is approximately Gaussian, it can be used to guide the optimizer to remain in the region defined by the hyperellipsoid of the Gaussian distribution.

It is worth noting that other popular shape parametrization methods rely on a linear combination of the design variables to perform a shape modification such as NURBS \cite{piegl1987curve,piegl1996nurbs}, radial basis functions \cite{sieger2014rbf} and PARSEC \cite{sobieczky1999parametric}. 
This suggests that the methodology proposed in this paper may apply to a range of shape parametrization methods.

Finally, it is important to recognize that the geometries $\mathbf{x}\in\mathbb{R}^{D}$ are produced from a design variable vector of much lower dimensionality $\mathbf{v}\in\mathbb{R}^{M}$, with $M << D$. 
Consequently, the degrees of freedom of the data $\mathbf{x}$ are much lower than $D$, and the data may be assumed to lie on a lower dimensional linear manifold. 
As a result, the sample covariance matrix $\mathbf{S}$ is expected to have many zero eigenvalues and may not be invertible.

In the next section, we describe two probabilistic latent variable models that we can use for density estimation and dimensionality reduction in the new framework.
\subsection{Factor Analysis}\label{sec:FA}
Factor analysis \cite{bartholomew1984foundations, spearman1961general, cattell1965biometrics} is one of the simplest and most fundamental generative models \cite{ghojogh2023elements} that during its long history has been applied in many different fields. 
We start from the assumption that the variable $\mathbf{x}$ can be written as a linear combination of a latent variable $\mathbf{z}$
\begin{equation}\label{eq:xz_fa_conmean}
	\mathbf{x} = \mathbf{W}\mathbf{z} + \boldsymbol{\mu} + \boldsymbol{\epsilon}
\end{equation}
and a \textit{factor loading} matrix $\mathbf{W}$ of size $(D\times K)$. 
The latent variable is generated by a standard Gaussian distribution $\mathbf{z}\sim \mathcal{N}(0, \mathbf{I})$ like the noise term $\boldsymbol{\epsilon} \sim \mathcal{N}(0, \boldsymbol{\Psi})$, independent from $\mathbf{z}$.
We can write that the conditional distribution is given by 
\begin{equation}\label{eq:xz_fa}
	p(\mathbf{x}|\mathbf{z}) = \mathcal{N}(\mathbf{x}|\mathbf{W}\mathbf{z} + \boldsymbol{\mu}, \boldsymbol{\Psi})
\end{equation}
a Gaussian distribution with conditional mean 
\begin{equation}\label{eq:decodeFA1}
\mathbb{E}[\mathbf{x}|\mathbf{z}] = \mathbf{W}\mathbf{z} + \boldsymbol{\mu}  
\end{equation}
and a diagonal conditional covariance ($D\times D$) matrix $\boldsymbol{\Psi}$. The FA model is described by the parameters $\boldsymbol{\Theta} = \{\mathbf{W}, \boldsymbol{\Psi}\}$  for a total of $(D\times K + D)$ parameters. 
We find the expression for the marginal distribution $p(\mathbf{x})$ solving the following integral
\begin{equation}\label{eq:marginal_fa}
	p(\mathbf{x}) = \int_\mathbf{z} p(\mathbf{x}|\mathbf{z}) p(\mathbf{z}) \,d\mathbf{z}
\end{equation}
where Eq. \ref{eq:marginal_fa} can be computed in closed form because it represents a convolution of two Gaussian distributions. The marginal is again Gaussian, $p(\mathbf{x}) = \mathcal{N}(\mathbf{x}| \boldsymbol{\mu}, \mathbf{C})$ with the covariance matrix equal to
\begin{equation}\label{eq:cov_fa}
	\mathbf{C} = \mathbf{W}\mathbf{W}^\top + \boldsymbol{\Psi}
\end{equation} 
The diagonal elements of $\mathbf{C}$ are given by a sum of two terms. The first term, $||\mathbf{w}_i||^{2}$, is called \textit{communality} because it represents the variance explained by the $K$ factors $\mathbf{W}$ respect to the feature $x_i$. The second term is the variance relative to the feature $x_i$ that is not explained by the factors and is called \textit{uniqueness}.
Given those results, we can write down the expression for the posterior distribution $p(\mathbf{z}|\mathbf{x})$ 
\begin{equation}\label{eq:zx_fa}
	p(\mathbf{z}|\mathbf{x}) = \mathcal{N}(\mathbf{z}|\mathbf{W}^{\top}\mathbf{C}^{-1}(\mathbf{x} - \boldsymbol{\mu}), \mathbf{G})
\end{equation}
with the matrix $\mathbf{G}$ equals to 
\begin{equation}\label{eq:mat_G}
	\mathbf{G} = (\mathbf{W}^\top\boldsymbol{\Psi}\mathbf{W} +\mathbf{I})^{-1}
\end{equation} 
and the posterior mean defined as a linear function of $\mathbf{x}$, as shown below
\begin{equation}\label{eq:encodeFA}
	\mathbb{E}[\mathbf{z}|\mathbf{x}] = \mathbf{W}^\top\mathbf{C}^{-1}(\mathbf{x} - \boldsymbol{\mu})
\end{equation}
while the posterior covariance $\mathbf{G}$ is independent from $\mathbf{x}$. 
The posterior mean can be reconstructed in data space using 
\begin{equation}\label{eq:decodeFA2}
	\tilde{\mathbf{x}} = \mathbf{W}\mathbb{E}[\mathbf{z}|\mathbf{x}] + \boldsymbol{\mu}    
\end{equation}

We can find the set of parameters $\boldsymbol{\Theta} = \{\mathbf{W}, \boldsymbol{\Psi}\}$ via the classical maximum likelihood estimation (MLE) with respect to the marginal likelihood using the information in our observed data $\mathbf{x}_n \in \mathbf{X}$. 
Unfortunately, for the FA model, a closed-form solution of the marginal likelihood is not available, necessitating the use of numerical optimization methods to find the parameters.
One efficient and viable method for finding parameters in latent variable models is to use the expectation maximization (EM) algorithm \cite{dempster1977maximum}. 
The EM algorithm aims to maximize the expectation of the joint log-likelihood $\ln p(\mathbf{X}, \mathbf{Z})$ respect to the posterior $p(\mathbf{z}|\mathbf{x})$. We can compute  
\begin{equation}
	\ln p(\mathbf{X}, \mathbf{Z}) = \sum_{n=1}^{N}\ln p(\mathbf{x}_n|\mathbf{z}_n) + \ln p(\mathbf{z}_n)
\end{equation}
then taking the expectation respect to the posterior $p(\mathbf{z}|\mathbf{x})$ we obtain
\begin{equation}
	\begin{split}\label{eq:max_like_fa}
		\mathbb{E}_{\mathbf{z}|\mathbf{x}}[ \ln p(\mathbf{X},\mathbf{Z}| \boldsymbol{\Theta})] 
		&= \sum_{n=1}^{N} \mathbb{E}_{\mathbf{z}|\mathbf{x}}[\ln p(\mathbf{x}_n|\mathbf{z}_n, \boldsymbol{\Theta}) + \ln p(\mathbf{z}_n)] \\
		&= \sum_{n=1}^{N} \mathbb{E}_{\mathbf{z}|\mathbf{x}}\bigg[-\frac{D}{2} \ln(2\pi) - \frac{1}{2}\ln |\boldsymbol{\Psi}| - \\ & 
		\qquad \frac{1}{2}(\mathbf{x}_n - \mathbf{W}\mathbf{z}_n - \boldsymbol{\mu})^\top \boldsymbol{\Psi}^{-1} (\mathbf{x}_n - \mathbf{W}\mathbf{z}_n - \boldsymbol{\mu})\bigg]
	\end{split}
\end{equation}
where we omitted the prior distribution since it does not depend on the set of parameters $\boldsymbol{\Theta}$. In the EM algorithm, the following two steps are repeated until convergence 
\begin{itemize}
	\item Expectation step (E-step) where we compute the sufficient statistics of the posterior distribution with respect to the old parameters $p(\mathbf{z}_n|\mathbf{x}_n, \boldsymbol{\Theta}_{\text{old}})$ 
	\begin{gather}\label{E_step_fa}
		\mathbb{E}_{\mathbf{z}|\mathbf{x}}[\mathbf{z}_n] =\mathbf{G} \mathbf{W}^{\top} \boldsymbol{\Psi}^{-1}(\mathbf{x}_n - \bar{\mathbf{x}}) \\
		\mathbb{E}_{\mathbf{z}|\mathbf{x}}[\mathbf{z}_n \mathbf{z}_n^{\top}] = \mathbf{G} + \mathbb{E}_{\mathbf{z}|\mathbf{x}}[\mathbf{z}_n]\mathbb{E}_{\mathbf{z}|\mathbf{x}}[\mathbf{z}_n]^{\top}
	\end{gather}
	\item Then given the statistics from the E-step, in the new Maximization step (M-step) we evaluate the parameters maximizing $\mathbb{E}_{\mathbf{z}|\mathbf{x}}[\ln p(\mathbf{X}, \mathbf{Z}| \boldsymbol{\Theta_{\text{old}}})]$ setting its derivatives respect to the parameters $\boldsymbol{\Theta}$ to zero
	\begin{gather}\label{M_step_fa}
		\mathbf{W}_{\text{new}} = \bigg[\sum_{n=1}^{N}(\mathbf{x}_n - \bar{\mathbf{x}})\mathbb{E}_{\mathbf{z}|\mathbf{x}}[\mathbf{z}_n]^{\top}\bigg]\bigg[\sum_{n=1}^{N}\mathbb{E}_{\mathbf{z}|\mathbf{x}}[\mathbf{z}_n \mathbf{z}_n^{\top}]\bigg]^{-1} \\
		\mathbf{\Psi}_{\text{new}} = \text{diag}\biggl\{\mathbf{S} - \mathbf{W}_{\text{new}} \frac{1}{N} \sum_{n=1}^{N} \mathbb{E}_{\mathbf{z}|\mathbf{x}}[\mathbf{z}_n](\mathbf{x}_n- \bar{\mathbf{x}})^{\top} \biggr\}
	\end{gather}
\end{itemize} 
The iterative procedure described above efficiently produces at the end of the iterations, a stationary point. 

Suppose now that $\mathbf{R}$ is a ($D \times D$) orthogonal matrix if we define the new rotated factor loadings 
\begin{equation}
\tilde{\mathbf{W}} = \mathbf{W}\mathbf{R}    
\end{equation}
the marginal covariance in Eq. \ref{eq:cov_fa} with this new reparametrization is given by 
\begin{equation}\label{eq:fa_R}
	\mathbf{C} = \tilde{\mathbf{W}}\tilde{\mathbf{W}}^\top +\boldsymbol{\Psi} = \mathbf{\mathbf{W}}\mathbf{R}\mathbf{R}^\top\mathbf{\mathbf{W}} +\boldsymbol{\Psi} = \mathbf{W}\mathbf{W}^\top +\boldsymbol{\Psi}
\end{equation}
which means that performing a rotation of the latent space achieves the same value for the marginal distribution. Consequently, the matrix $\mathbf{W}$ is not uniquely identifiable. 

A remark regards the computation of the inverse of the covariance matrix $\mathbf{C}$. Instead of directly take the inverse of $\mathbf{C}$, we can use the Woodbury matrix inversion formula \cite{woodbury1950inverting} obtaining
\begin{equation}\label{eq:C_inv_FA}
	\mathbf{C}^{-1} = \boldsymbol{\Psi}^{-1} - \boldsymbol{\Psi}^{-1}\mathbf{W}\mathbf{G}\mathbf{W}^\top\boldsymbol{\Psi}^{-1}
\end{equation}
with $\mathbf{G}$ defined in Eq. \ref{eq:mat_G}. Using the previous transformation we only need to compute the matrix $\mathbf{G}$ which is ($K\times K$) rather than find directly the inverse of $\mathbf{C}$ with time complexity $\mathcal{O}(D^3)$.

In the next section, we discuss a special case of FA where the conditional covariance matrix $\boldsymbol{\Psi}$ is restricted to be isotropic allowing the computation of the MLE solution of the marginal distribution in closed form.
\subsection{Probabilistic principal component analysis}\label{sec:PPCA}
Probabilistic PCA (PPCA) introduced independently by \cite{tipping1999probabilistic} and  \cite{roweis1997algorithms} is a probabilistic formulation of classical PCA  which assumes a linear relationship between the observed variable $\mathbf{x}$ and the $K$-dimensional latent variable $\mathbf{z}$ plus a Gaussian noise term $\epsilon \sim \mathcal{N}(0, \sigma^2 \mathbf{I})$ 
\begin{equation}\label{eq:decodePPCA}
	\mathbf{x} = \mathbf{W}\mathbf{z} + \boldsymbol{\mu} + \epsilon
\end{equation}
where $\mathbf{W}$ is a matrix of size ($D\times K$) and $\boldsymbol{\mu}$ is a $D$-dimensional vector mean. 

Assuming a zero mean and uncorrelated Gaussian latent variable $\mathbf{z} \sim \mathcal{N}(0,  \mathbf{I})$, the value of $\mathbf{x}$ conditioned on $\mathbf{z}$ is given by
\begin{equation}\label{eq:xz_ppca}
	p(\mathbf{x}|\mathbf{z}) = \mathcal{N}(\mathbf{x}|\mathbf{W}\mathbf{z} + \boldsymbol{\mu}, \sigma^2 \mathbf{I})
\end{equation}
where the conditional mean equals to 
\begin{equation}\label{eq:xz_ppca_conmean}
    \mathbb{E}[\mathbf{x}|\mathbf{z}] = \mathbf{W}\mathbf{z} + \boldsymbol{\mu}
\end{equation}
and the conditional isotropic covariance ($D\times D$) matrix given by $\sigma^2 \mathbf{I}$. 

The PPCA model is described by a set of parameters $ \boldsymbol{\Theta} = \{\mathbf{W}, \sigma^2 \mathbf{I}\}$ for a total of $(D\times K + 1)$ parameters. 
We can compute the marginal distribution $p(\mathbf{x})$ solving the following integral
\begin{equation}\label{eq:marginal_ppca}
	p(\mathbf{x}) = \int_\mathbf{z} p(\mathbf{x}|\mathbf{z}) p(\mathbf{z}) \,d\mathbf{z}
\end{equation}
that can be computed in closed form since it represents again a convolution of two Gaussian distributions with $p(\mathbf{x}) = \mathcal{N}(\mathbf{x}| \boldsymbol{\mu}, \mathbf{C})$ and a covariance matrix given by
\begin{equation}\label{eq:cov_ppca}
	\mathbf{C} = \mathbf{W}\mathbf{W}^{\top} + \sigma^2\mathbf{I}
\end{equation} 
The marginal distribution depends on the parameters $\boldsymbol{\Theta}$ that can be determined by maximizing the marginal likelihood
\begin{equation}
	\begin{split}\label{eq:max_like_ppca}
		\ln p(\mathbf{X}|\boldsymbol{\Theta} ) 
		&= \sum_{n=1}^{N}\ln p(\mathbf{x}_n|\mathbf{W}, \boldsymbol{\mu}, \sigma^2 ) \\
		&= - \frac{ND}{2} - \frac{N}{2}\ln(2\pi) - \frac{N}{2}\sum_{n = 1}^{N}(\mathbf{x}_n - \boldsymbol{\mu})^{\top}\mathbf{C}^{-1}(\mathbf{x}_n - \boldsymbol{\mu})
	\end{split}
\end{equation}
Setting the derivatives to zero, the likelihood is maximized at
\begin{gather}\label{mar_like}
	\boldsymbol{\mu} = \frac{1}{N}\sum_{i=1}^{N} \mathbf{x}_i \\
	\sigma^2 = \frac{1}{D-K} \sum_{i=K+1}^{D}\lambda_i \\
	\mathbf{W} = \mathbf{U} (\boldsymbol{\Lambda} - \sigma^2 \mathbf{I})^{1/2}\mathbf{R} \label{eq:wmle}
\end{gather}
The marginal likelihood in Eq. \ref{eq:max_like_ppca} is a nonconvex function but all stable stationary points are global minima \cite{tipping1999probabilistic}.

The maximum likelihood solution is obtained in closed form, where $\boldsymbol{\mu}$ is the sample mean, $\boldsymbol{\Lambda}$ is a ($K\times K$) diagonal matrix composed by the top-$K$ largest eigenvalues of the sample covariance matrix $\mathbf{S}$ and $\mathbf{U}$ is a matrix composed by the PCA eigenvectors. 
The expected value of the residual variance of the discarded $N-K$ principal components is represented by $\sigma^2$.
The orthogonal $(K\times K)$ matrix $\mathbf{R}$ can be set to $\mathbf{I}$. In this case, the matrix $\mathbf{W}$ is composed of the principal components $\mathbf{U}$ scaled by a factor of $\sqrt{\lambda_i - \sigma^2}$. 
However, if a numerical optimization algorithm is used to find the maximum likelihood solution, $\mathbf{R}$ can be arbitrary and the resulting matrix $\mathbf{W}$ may not be orthogonal at the optimal solution.

To define a projection of a point $\mathbf{x}$ in the latent space we can compute the posterior distribution of $p(\mathbf{z}|\mathbf{x})$ using the Bayes theorem as 
\begin{equation}\label{eq:zx_ppca}
	p(\mathbf{z}|\mathbf{x}) = \mathcal{N}(\mathbf{z}|\mathbf{M}^{-1}\mathbf{W}^{\top}(\mathbf{x}-\boldsymbol{\mu}), \sigma^2 \mathbf{M}^{-1})
\end{equation}
with $\mathbf{M}= \mathbf{W}^{\top}\mathbf{W} + \sigma^2\mathbf{I}$. The latent representation of a data point $\mathbf{x}$ in the latent space is given by the posterior mean
\begin{equation}\label{eq:encodePPCA}
	\mathbb{E}[\mathbf{z}|\mathbf{x}] = \mathbf{M}^{-1}\mathbf{W}^{\top}(\mathbf{x}-\boldsymbol{\mu})
\end{equation}
and can be reconstructed in data space with 
\begin{equation}\label{eq:decodePPCA2}
\tilde{\mathbf{x}} = \mathbf{W}\mathbb{E}[\mathbf{z}|\mathbf{x}] + \boldsymbol{\mu}    
\end{equation}

Differently from PCA, in PPCA and FA, we are not performing an orthogonal projection of the data in the latent space. In fact, if we take the limit $\sigma^2 \to 0$ with $\mathbf{W}$ given by the MLE solution we recover the classical PCA, 
\begin{equation}
(\mathbf{W}^{\top}\mathbf{W})^{-1}\mathbf{W}^{\top}(\mathbf{x}-\boldsymbol{\mu})    
\end{equation}
but in this case, the posterior covariance is zero and the density becomes singular and then not defined.

Even if an exact closed-form solution of the likelihood is provided, could be advantageous to use an iterative procedure to compute the parameters instead of performing the eigenvalue decomposition of the $(D\times D)$ sample covariance matrix $\mathbf{S}$. This can be an expensive and not viable option in case of very large input dimensionality $D$. 
As for the FA model, the EM algorithm involves in the maximization of the expectation of the complete data log-likelihood $\ln p(\mathbf{X}, \mathbf{Z}|\boldsymbol{\Theta})$ respect to the posterior $p(\mathbf{z}|\mathbf{x})$. For the details, see for example \cite{murphy2012machine, bishop2006-PRM}.

Also for PPCA an efficient computation of the inverse of the marginal covariance $\mathbf{C}$ can be performed as follows 
\begin{equation}\label{eq:C_inv_PPCA}
\mathbf{C}^{-1} = \sigma^{-2} \mathbf{I} - \sigma^{-2} \mathbf{W}\mathbf{M}^{-1}\mathbf{W}^\top
\end{equation}

Finally, PPCA as the FA is invariant to rotations in latent space which means that there exists a set of matrices $\mathbf{W}$ giving the same marginal distribution as shown in Eq. \ref{eq:fa_R}.
\subsection{An Anomaly-Aware Optimization Model for SBDO}\label{sec:mahopt}
Once we trained our model (either a PPCA or FA) on the data matrix $\mathbf{X}$ we can start the SBDO process. The optimization problem is given by
\begin{alignat}{3}
	\min_{\mathbf{z}}  & \qquad           f(\mathbf{x}(\mathbf{z})) \label{eq:rpopt_sbdo1} \\ 
	\text{subject to: }&  \qquad g_j(\mathbf{x}(\mathbf{z}))  \leq a_j \label{eq:rpopt_sbdo2}  &\quad  j=1,\dots,J  \\
	&  \qquad \varphi(\mathbf{x}(\mathbf{z}))  \leq \varphi_{\max} \label{eq:rpopt_sbdo3}  \\
	&  \qquad z_{k}^{\text{lb}} \leq  z_{k} \leq  z_{k}^{\text{ub}} \label{eq:rpopt_sbdo4}&\quad  k = 1,\dots, K    
\end{alignat}
where we added a new constraint in Eq. \ref{eq:rpopt_sbdo3}. 
The term $\varphi(\mathbf{x}(\mathbf{z}))$  represents the anomaly score associated to the geometry $\mathbf{x}(\mathbf{z})$.  
If the anomaly score is high, might indicate potential anomalous and out-of-distribution instances.

Given the inverse of the covariance matrix $\mathbf{C}^{-1}$, the scalar $\varphi(\mathbf{x}(\mathbf{z}))$ can be represented by the value of the marginal distribution estimated through the probabilistic latent variable models
\begin{equation}\label{gaussC}
	\mathcal{N}(\mathbf{x}|\boldsymbol{\mu}, \mathbf{C}) = \frac{1}{2\pi^{D/2} |\mathbf{C}^{1/2}|} \exp \biggl\{-\frac{1}{2} (\mathbf{x} - \boldsymbol{\mu})^{\top}\mathbf{C}^{-1}(\mathbf{x} - \boldsymbol{\mu})\biggr\}
\end{equation}
In this work, we use a more interpretable metric which is given by the exponent of Eq. \ref{gaussC}, namely the squared Mahalanobis distance  \cite{mahalanobis1936generalized} $d^2_M(\mathbf{x})$ defined as 
\begin{equation}\label{eq:mah_dist}
	d^{2}_M(\mathbf{x}) =(\mathbf{x} - \boldsymbol{\mu})^{\top} \mathbf{C}^{-1}(\mathbf{x} - \boldsymbol{\mu})
\end{equation}
representing an extension of the Euclidean distance which takes into account the correlation between vectors. For $\mathbf{C} = \mathbf{I}$, Eq. \ref{eq:mah_dist} reduces to the Euclidean distance.

The scalar $\varphi_{\max}$ represents the upper limit on the allowed anomaly score for the shape generated by the optimizer within the latent space. 
We calculate $\varphi_{\max}$ with respect to the reconstructed dataset $\tilde{\mathbf{X}}$, by encoding each data point $\mathbf{x}$ in the latent space using Eq. \ref{eq:encodeFA} or Eq. \ref{eq:encodePPCA} and then projecting it back in data space using Eq. \ref{eq:decodeFA2} or Eq. \ref{eq:decodePPCA2}, respectively for FA and PPCA. We then evaluate Eq. \ref{eq:mah_dist} for every $\tilde{\mathbf{x}}\in\tilde{\mathbf{X}}$. The value of $\varphi_{\max}$ is determined in this work using the interquartile range (IQR) rule.
\begin{figure}[htb!]
	\centering
	\includegraphics[width=0.8\textwidth]{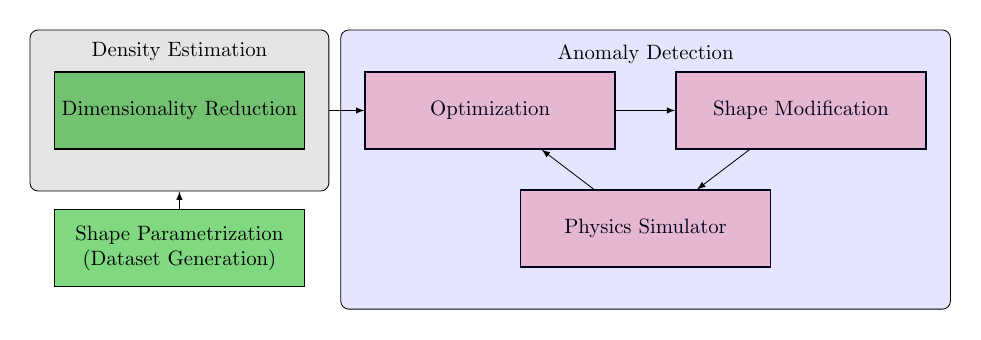}
	\caption{Graphical representation of the new SBDO framework (DR-DE-SBDO) for anomaly detection.}\label{fig:sbdo_druq}
\end{figure}
The DR-DE-SBDO for anomaly detection procedure is summarized in Procedure \ref{proc:druqsbdo} and the corresponding scheme in Fig. \ref{fig:sbdo_druq}.

It would be worthwhile to investigate the general distribution of the random variable $d^2_M(\mathbf{x})$. Now we show that under some conditions the squared Mahalanobis distance follows a chi-square distribution $d^2_M(\mathbf{x}) \sim \chi^{2}_{D}$.
\begin{lemma}\label{lemma:chi}
	Given a $D$-dimensional random variable vector $\mathbf{x}$, then if $\mathbf{x}\sim \mathcal{N}(\mathbf{x}|\boldsymbol{\mu}, \boldsymbol{\Sigma})$ then the distribution of the squared Mahalanobis distance follows a chi-square distribution $d^2_M(\mathbf{x}) \sim \chi^{2}_{d}$. 
\end{lemma}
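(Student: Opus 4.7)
The plan is to reduce the claim to the well-known fact that a sum of $D$ independent squared standard normals follows a $\chi^2_D$ distribution by first whitening the random vector $\mathbf{x}$.

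First I would exploit that $\boldsymbol{\Sigma}$ is symmetric positive definite, so its eigendecomposition gives $\boldsymbol{\Sigma} = \mathbf{Q}\boldsymbol{\Lambda}\mathbf{Q}^\top$ with $\mathbf{Q}$ orthogonal and $\boldsymbol{\Lambda}$ diagonal with strictly positive entries. This lets me define a symmetric square root $\boldsymbol{\Sigma}^{1/2} = \mathbf{Q}\boldsymbol{\Lambda}^{1/2}\mathbf{Q}^\top$ and its inverse $\boldsymbol{\Sigma}^{-1/2}$. I would then introduce the whitened variable
\begin{equation}
\mathbf{y} = \boldsymbol{\Sigma}^{-1/2}(\mathbf{x} - \boldsymbol{\mu}).
\end{equation}
Using the affine transformation property of the multivariate Gaussian (if $\mathbf{x} \sim \mathcal{N}(\boldsymbol{\mu}, \boldsymbol{\Sigma})$ and $\mathbf{A}$ is a fixed matrix, $\mathbf{b}$ a fixed vector, then $\mathbf{A}\mathbf{x} + \mathbf{b}$ is Gaussian with mean $\mathbf{A}\boldsymbol{\mu} + \mathbf{b}$ and covariance $\mathbf{A}\boldsymbol{\Sigma}\mathbf{A}^\top$), I would verify that $\mathbb{E}[\mathbf{y}] = \mathbf{0}$ and $\mathrm{Cov}(\mathbf{y}) = \boldsymbol{\Sigma}^{-1/2}\boldsymbol{\Sigma}\boldsymbol{\Sigma}^{-1/2} = \mathbf{I}$, so $\mathbf{y}\sim\mathcal{N}(\mathbf{0}, \mathbf{I})$.

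Next I would rewrite the squared Mahalanobis distance in terms of $\mathbf{y}$:
\begin{equation}
d^2_M(\mathbf{x}) = (\mathbf{x}-\boldsymbol{\mu})^\top \boldsymbol{\Sigma}^{-1}(\mathbf{x}-\boldsymbol{\mu}) = (\mathbf{x}-\boldsymbol{\mu})^\top \boldsymbol{\Sigma}^{-1/2}\boldsymbol{\Sigma}^{-1/2}(\mathbf{x}-\boldsymbol{\mu}) = \mathbf{y}^\top \mathbf{y} = \sum_{i=1}^{D} y_i^2,
\end{equation}
using the symmetry of $\boldsymbol{\Sigma}^{-1/2}$. Since the components $y_1,\dots,y_D$ of $\mathbf{y}$ are jointly Gaussian with identity covariance, they are mutually independent standard normals, and $\sum_{i=1}^D y_i^2$ is by definition a chi-square random variable with $D$ degrees of freedom. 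Hence $d^2_M(\mathbf{x}) \sim \chi^2_D$, which is the claim (the subscript $d$ in the statement should read $D$).

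The only subtle point is ensuring that $\boldsymbol{\Sigma}^{-1/2}$ is well defined, which I would handle by invoking the assumption that $\boldsymbol{\Sigma}$ is positive definite (so no zero eigenvalues); nothing else in the argument is technically delicate. If the model were PPCA/FA so that $\mathbf{C} = \mathbf{W}\mathbf{W}^\top + \boldsymbol{\Psi}$ might be only positive semidefinite or ill-conditioned, I would note that the same argument works on the subspace where $\mathbf{C}$ is strictly positive definite, but that case is outside the hypothesis of the lemma as stated.
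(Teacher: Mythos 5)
Your proposal is correct and follows essentially the same route as the paper: both reduce the quadratic form to a sum of squared standard normals via the eigendecomposition of $\boldsymbol{\Sigma}$, you by whitening with the symmetric square root $\boldsymbol{\Sigma}^{-1/2}$, the paper by rotating into the eigenbasis and dividing each component by its eigenvalue. Your version is in fact slightly more careful, since it makes explicit the standardization and the independence of the components (the paper leaves the step from $y_i^2/\lambda_i$ to squared \emph{standard} normals implicit).
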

\begin{proof}
	For the random variable $\mathbf{x}$ the Gaussian distribution is given by
	\begin{equation}
		\mathcal{N}(\mathbf{x}|\boldsymbol{\mu}, \mathbf{\Sigma}) = \frac{1}{2\pi^{D/2} |\boldsymbol{\Sigma}^{1/2}|} \exp \biggl\{-\frac{1}{2} (\mathbf{x} - \boldsymbol{\mu})^{\top}\boldsymbol{\Sigma}^{-1}(\mathbf{x} - \boldsymbol{\mu})\biggr\}
	\end{equation}
	consequently, the functional dependence with respect to $\mathbf{x}$ is given only with respect to the squared Mahalanobis distance $d^2_M(\mathbf{x})$ inside the exponent of the Gaussian distribution. Using the eigenvalue decomposition for the covariance matrix 
	\begin{equation}
	    \boldsymbol{\Sigma}\mathbf{u}_i = \mathbf{u}_i \lambda_i \qquad i = 1,\dots, D
	\end{equation}
	its inverse can be written as follows
	\begin{equation}
		\boldsymbol{\Sigma}^{-1} = \sum_{i = 1}^{D} \frac{1}{\lambda}_i \mathbf{u}_i\mathbf{u}_i^{\top}
	\end{equation}
	and the squared Mahalanobis distance becomes 
	\begin{equation}
		d^2_M(\mathbf{x}) = \sum_{i = 1}^{D} \frac{1}{\lambda}_i(\mathbf{x} - \boldsymbol{\mu})^{\top} \mathbf{u}_i\mathbf{u}_i^{\top}(\mathbf{x} - \boldsymbol{\mu}) =  \sum_{i = 1}^{D} \frac{y_i^{2}}{\lambda_i}
	\end{equation}
	where $y_i = \mathbf{u}_i^{\top}(\mathbf{x} - \boldsymbol{\mu})$ represents the $i$-th component of the vector $\mathbf{y}$ in the new coordinate system defined by the eigenvectors $\mathbf{U}$. The random variable $\mathbf{x}$ is Gaussian and the affine transformation $\mathbf{y}$ is again Gaussian, consequently, since for definition the sum of the squared value of normal random variables follows a chi-square distribution then $d^2_M(\mathbf{x})\sim \chi^{2}_{d}$.
\end{proof}
This result can be used as an additional assessment of the nature of the distribution of the random variable $\mathbf{x}$ and it will be utilized in section \ref{subsec:mah}. 
\begin{procedure}[htb!]
  \caption{DR-DE-SBDO().}
  \label{proc:druqsbdo}
  Generate a dataset $\mathbf{X}$ by sampling the design variable vector $\mathbf{v}$ from a uniform distribution, as outlined in section \ref{sec:genFFD} \;
  Train PPCA/FA with the data $\mathbf{X}$ to obtain the set of parameters $\boldsymbol{\Theta}$ \; 
  Compute the inverse of the covariance matrix $\mathbf{C}^{-1}$ using Eq. \ref{eq:C_inv_PPCA} for PPCA or Eq. \ref{eq:C_inv_FA} for FA \; 
  Fix the latent space dimensionality $K$ to the desired level of explained geometrical variance \;
  Determine the scalar $\varphi_{\max}$ introduced in Eq. \ref{eq:rpopt_sbdo3} as follows \newline
  \For{$n$ \KwTo $N$ }{Compute $\mathbb{E}[\mathbf{z}_n|\mathbf{x}_n]$ from Eq. \ref{eq:encodeFA} or Eq. \ref{eq:encodePPCA} for FA and PPCA respectively \; 
  Compute the reconstructed data $\tilde{\mathbf{x}}_n = \mathbf{W}\mathbb{E}[\mathbf{z}_n|\mathbf{x}_n] + \boldsymbol{\mu}$ \;
  Compute the squared Mahalanobis distance $d^{2}_{M}(\tilde{\mathbf{x}}_n) = (\tilde{\mathbf{x}}_n - \boldsymbol{\mu})^{\top} \mathbf{C}^{-1}(\tilde{\mathbf{x}}_n - \boldsymbol{\mu})$ and collect it inside the vector $\mathbf{d}^2_{M}(\tilde{\mathbf{x}})$\;} 
  Set the value $\varphi_{\max}$ to the 3/2 the interquartile range of the values collected in $\mathbf{d}^2_{M}(\tilde{\mathbf{x}})$ \;
  Start the SBDO process with respect to the latent variable $\mathbf{z}\in \mathbb{R}^K$ and fix the maximum number of function evaluations $I_{max}$ (or any other stopping criteria for the optimization)\; 
  \For{$i$ \KwTo $I_{max}$ }{For a given $\mathbf{z}_i$ from the optimization algorithm project it back in the input space $\mathbf{x}_i = \mathbf{W}\mathbf{z}_i + \boldsymbol{\mu}$ \;
  Evaluate the objective function $f(\mathbf{x}_i)$ \; }
\end{procedure}

To conclude this section, we analyze the effect of the constraint in Eq. \ref{eq:rpopt_sbdo3} on the SBDO process from a geometrical perspective. Specifically, the constraints in Eq. \ref{eq:rpopt_sbdo3} and Eq.  \ref{eq:rpopt_sbdo4} define a hyperellipsoid and a hyperrectangle in the data space $\mathbb{R}^{D}$, respectively. To simplify the analysis, we can approximate the hyperellipsoid with a hypersphere, corresponding to an isotropic covariance matrix for the Gaussian distribution. Similarly, we assume a hypercube instead of a hyperrectangle to provide a more general analysis.
We can derive an expression for the volume of the hypersphere and for the hypercube, using simple tools from the geometry of $\mathbb{R}^{D}$ \cite{Kendall1962ACI}. 
The volume of the hypersphere of radius $r$ is given by 
\begin{equation}
    V_D = \pi ^{D/2}r^{D} (\Gamma(D/2 + 1))^{-1}
\end{equation}
with the Gamma function in this case defined as $\Gamma(D) = \int_{0}^{\infty} t^{D -1} e^{-t} dt$, with the well known property that $\Gamma(D) = (D-1)!$. 
The volume of the hypercube of length $l =2r$ is given by 
\begin{equation}
    H(D) = l^D = 2r^D
\end{equation}
The asymptotic behavior for the ratio of the volume of the hypersphere inscribed in a hypercube is given by
\begin{equation}\label{eq:limvol}
	\lim_{D \to +\infty} \frac{V_D}{H_D} = \frac{\pi ^{D/2}r^{D}}{2r^D \Gamma(\frac{D}{2} + 1)} = 0 
\end{equation}
with
\begin{equation}
	\Gamma\bigg(\frac{D}{2} + 1\bigg) =  
	\begin{cases}
		(\frac{D}{2})! & \text{if}\ \,\, D \,\, \text{is even}  \\
		\sqrt{\pi}(\frac{D!!}{2^{d+1/2}}) & \text{if}\ \,\, D \,\, \text{is odd}
	\end{cases}
\end{equation} 
for $D = 2$ the limit in Eq. \ref{eq:limvol} is equal to $78.5\%$, meaning that the circle covers the $78.5\%$ of the space defined by the hypercube. The limit converges very fast to zero. For $D = 6$, the hypersphere covers only the $\simeq 8\%$ of the entire hypercube. 
This shows that as the dimensionality increase, the majority of the volume of the hypercube is concentrated in its $2^D$ corners.

A graphical representation is given in Fig. \ref{fig:cube_sphere} showing that the radius of the inscribed circle accurately measures the discrepancy in volume between the hypercube and the inscribed hypersphere in $D$ dimensions. For our application, the white areas are where anomalous geometries might belong while the grey areas represent the region where geometries in the original parametrization are confined. 
\begin{figure}[htb!]
	\centering
	\subfigure[$D=2$]{\label{fig:cube_sphere_d2}\includegraphics[width=0.18\textwidth]{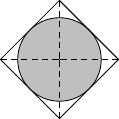}}
	\subfigure[$D=3$]{\label{fig:cube_sphere_d3}\includegraphics[width=0.18\textwidth]{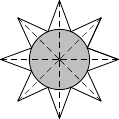}}
	\subfigure[$D=4$]{\label{fig:cube_sphere_d4}\includegraphics[width=0.18\textwidth]{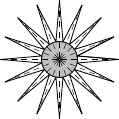}}
	\subfigure[$D=5$]{\label{fig:cube_sphere_d5}\includegraphics[width=0.18\textwidth]{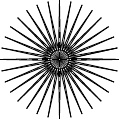}}
	\subfigure[$D=6$]{\label{fig:cube_sphere_d6}\includegraphics[width=0.18\textwidth]{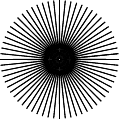}}
\caption{Conceptualization about the behavior of a sphere inscribed in a hypercube in high-dimensional spaces.}
\label{fig:cube_sphere}
\end{figure}
This indicates that not considering the constraint in Eq. \ref{eq:rpopt_sbdo3} in the optimization model could allow the optimizer
to search into an arbitrarily large design space, with a high probability to produce anomalous geometries that are very far from the center of the distribution (i.e. the mean). 
\section{Application: Shape Optimization of a Naval Destroyer DTMB 5415}\label{sec:7}
In this section we apply the new proposed framework for the optimization of a US naval Destroyer, the DTMB 5415 model (Fig. \ref{fig:dtmb_model}) widely investigated by towing tank experiments as in \cite{olivieri2001towing}.
\begin{figure}[htb!]
	\centering
	\includegraphics[width=10cm]{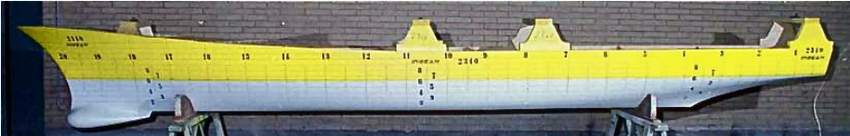}
	\caption{A 5.720 m length model of the DTMB 5415 (CNR-INM model 2340).}
	\label{fig:dtmb_model}
\end{figure}
\begin{table}[htb!]
	\begin{center}
		\caption{DTMB 5415 model scale main particulars and test condition.}
		\label{tab:main_particulars}
		\small
		\begin{tabular}{lcc}
			\toprule
			Description          & Unit & Value \\
			\midrule
			Displacement                  & tonnes   & 0.549\\
			Length between perpendiculars & m        & 5.720\\
			Beam                          & m        & 0.760\\
			Draft                         & m        & 0.248\\
			Longitudinal center of gravity& m        & 2.884\\
			Vertical center of gravity    & m        & 0.056\\
			Water density                 & kg/m$^3$ & 998.5 \\
			Kinematic viscosity           & m$^2$/s  & 1.09E-06\\
			Gravity acceleration          & m/s$^2$  & 9.803 \\
			Froude number                 & --       & 0.280\\
			\bottomrule
		\end{tabular}
	\end{center}
\end{table}
\subsection{Design-Space Parametrization and Sampling}
The design variables ($M=21$) are sampled from a uniform random distribution obtaining $N = 10000$ hull-form feasible design from the FFD procedure.  Designs not satisfying the geometrical constraints are not included in the data matrix so all designs processed successively by the dimensionality reduction models are feasible. 
\begin{table*}[htb!]
	\centering
	\small
	\caption{Hull shape modification, FFD control points, and variables setup.}\label{tab:FFD_hull}
	\begin{tabular}{clccc}
		\toprule
		Layer & Layer $\mathsf{x}$-plane & No. CPs & No. active CPs & Variable range \\	
		\midrule	
		1 & $\mathsf{x}=0.00$ & 12 & 1   & $-1.0\leq v_y^{(1,2)}\leq 1.0$  \\
		2 &$\mathsf{x}=18.21$ & 12 & 2    & $-1.0\leq v_y^{(3,4)}\leq 1.0$  \\
		3 &$\mathsf{x}=36.42$ & 12 & 2      & $-1.0\leq v_y^{(5,6)}\leq 1.0$ \\
		4 &$\mathsf{x}=54.63$ & 12 & 2      & $-1.0\leq v_y^{(7,8)}\leq 1.0$ \\
		5 &$\mathsf{x}=72.85$ & 12 & 2      & $-1.0\leq v_y^{(9,10)}\leq 1.0$ \\
		6 &$\mathsf{x}=91.06$ & 12 & 2       & $-1.0\leq v_y^{(11,12)}\leq 1.0$ \\
		7 &$\mathsf{x}=109.27$ & 12 & 1 & $-1.0\leq v_y^{13}\leq 1.0$  \\
		7 &$\mathsf{x}=109.27$ & 12 & 1   & $-2.0\leq v_y^{14}\leq 2.0$  \\
		8 &$\mathsf{x}=127.49$ & 12 & 1  & $-2.0\leq v_y^{15}\leq 2.0$  \\
		9 &$\mathsf{x}=145.70$ & 12 & 1   & $-2.0\leq v_y^{16}\leq 2.0$ \\                         
		\bottomrule
	\end{tabular}
\end{table*}
\begin{table*}[htb!]
	\centering
	\small
	\caption{Bulb shape modification, FFD control points, and variables setup.}\label{tab:FFD_bulb}
	\begin{tabular}{clccc}
		\toprule
		Layer & Layer $\mathsf{x}$-plane & No. CPs & No. active CPs &  Variable range \\	
		\midrule	
		1 & $\mathsf{x}=127.20$ & 9 & 0   & (-)  \\
		2, 3 & $\mathsf{x}=133.20 \land \mathsf{x}=139.20$ & 18 & 1  & $-1.0\leq v_y^{17}\leq 1.0$  \\
		2, 3 & $\mathsf{x}=133.20 \land \mathsf{x}=139.20$ & 18 & 1  & $-1.0\leq v_z^{18}\leq 1.0$  \\
		2, 3 &$\mathsf{x}=133.20 \land \mathsf{x}=139.20$ & 18 & 1  & $-0.3\leq v_x^{19}\leq 0.3$ \\
		4 & $\mathsf{x}=139.20$ & 9 & 1 & $-1.0\leq v_z^{20}\leq 1.0$  \\
		4 &$\mathsf{x}=139.20$ & 9 & 1  & $0.0\leq v_x^{21}\leq 0.5$ \\                        
		\bottomrule
	\end{tabular}
\end{table*}
\begin{figure}[htb!]
	\centering
	\subfigure[FFD control points over the hull.]{\label{fig:ffd_hull}\includegraphics[width=0.49\textwidth]{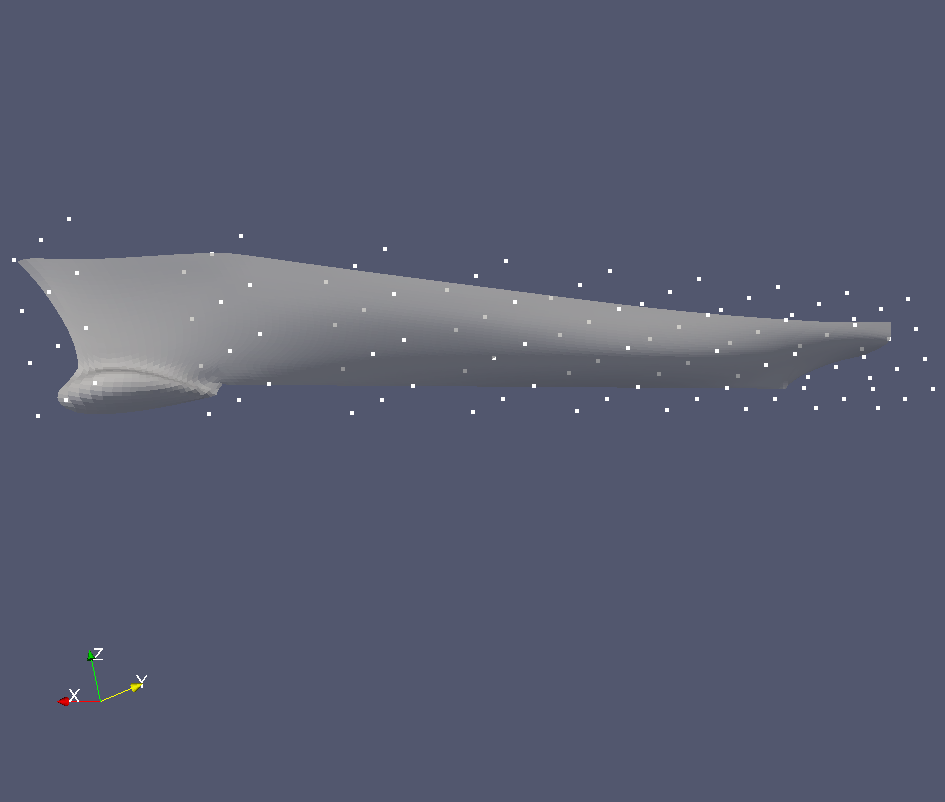}}
\subfigure[FFD control points over the bulb.]{\label{fig:ffd_bulb}\includegraphics[width=0.49\textwidth]{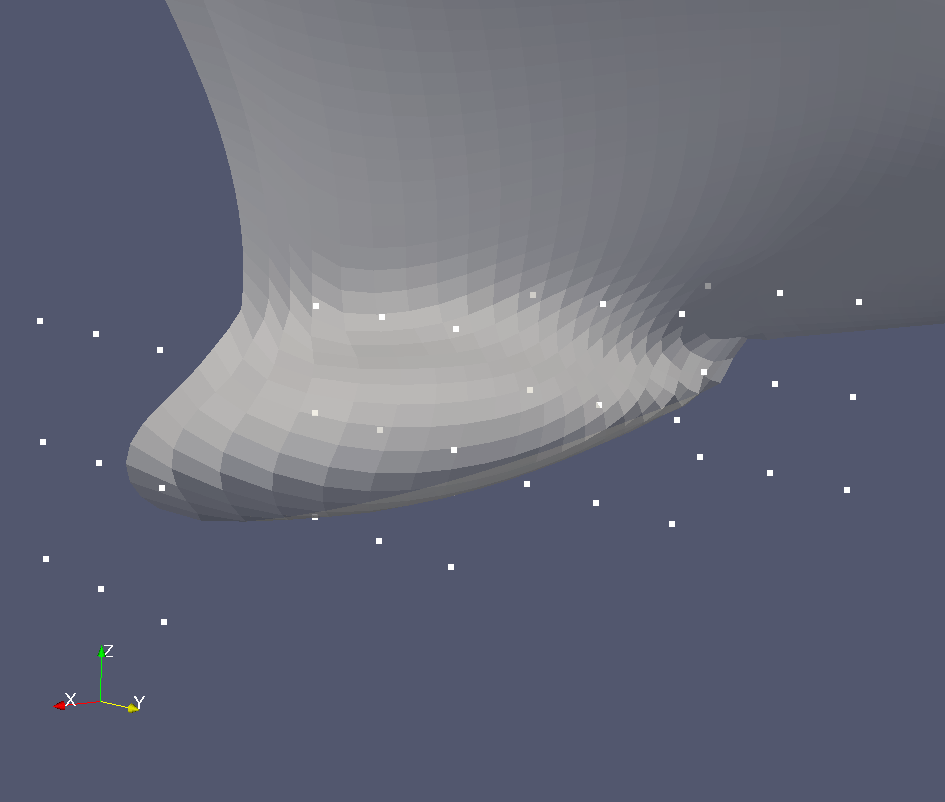}}
\caption{FFD control points.}
\label{fig:FFD}
\end{figure}

The data matrix $\mathbf{X}$ collects a $L=7200$ grids points from hull discretization.  
The resulting dimensionality in the data space $D$ is equal to $D=L\times 3 = 21600$. 
We used two different and independent FFD lattices for the hull and the bulb as shown in Fig. \ref{fig:ffd_hull} and in Fig. \ref{fig:ffd_bulb} respectively. The number of the design variables controlling the shape modification along the hull is $15$ and $6$ design variables control the shape modification at the bulb level. The design modifications for this application are allowed in all three Cartesian components for the bulb and only in the $\mathsf{y}$ direction over the hull. 
A detailed definition of the design space is given in Tab. \ref{tab:FFD_bulb} and in Tab. \ref{tab:FFD_hull}.
For the FFD implementation we used PyGeM \cite{TezzeleDemoMolaRozza2020PyGeM}.
\subsection{Dimensionality Reduction Process}
The first step is to find the right dimensionality of the latent space $K$. 
This can be easily assessed by computing the explained geometrical variance resolved by the dimensionality reduction models varying the number of components $K$. 
For this application, we fix a threshold of $99 \%$ for the explained variance by the PPCA and FA.
\begin{figure}[htb!]
	\centering
	\includegraphics[width=0.49\textwidth]{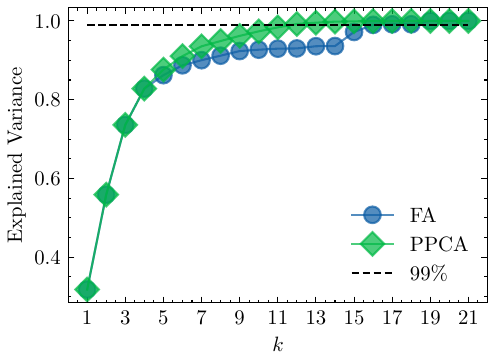}
	\caption{Design-space variability retained as a function of the number $K$-components.}
	\label{fig:exvar}
\end{figure}
\begin{figure}[htb!]
	\centering
	\subfigure[PPCA]{\label{fig:eig1_ppca}\includegraphics[width=0.45\textwidth]{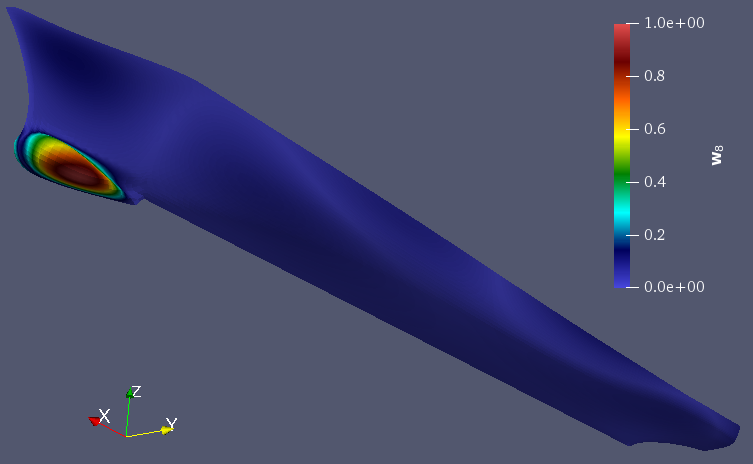}}
	\subfigure[FA]{\label{fig:eig1_fa}\includegraphics[width=0.45\textwidth]{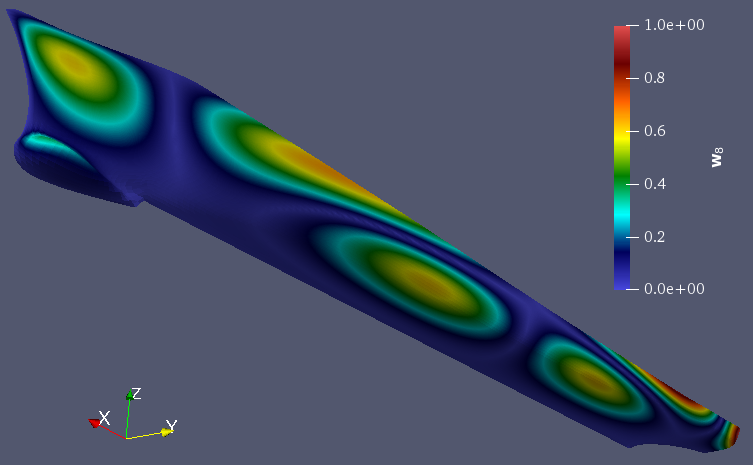}}
	\caption{Graphical representation of PPCA and FA components.}
	\label{fig:eig1}
\end{figure}

In Fig. \ref{fig:exvar} there is the convergence of the explained variance for the dimensionality reduction models. 
For the PPCA the components are computed using the closed-form solution of the marginal likelihood in Eq. \ref{eq:wmle}. In this case, the explained variance is given by the PCA eigenvalues since $\mathbf{W}$ and $\mathbf{U}$ span the same subspace.
For the FA model, we used the EM algorithm to find model parameters, and the variance explained by each component is given by $||\mathbf{w}_k||^{2}$. 
From Fig. \ref{fig:exvar} we observe that PPCA reaches the threshold of $99\%$ with $K=12$ components while the FA required $K=16$. 
In this case, PPCA components are essentially scaled PCA eigenvectors, which results in PPCA being superior to FA in terms of variance explained. 
This is because FA does not directly maximize variance in its objective function (as PCA does), as seen in Eq. \ref{eq:max_like_fa}.

As stated before the columns of the matrix $\mathbf{W}$ represent spatial geometrical components. To illustrate this statement graphically, Fig. \ref{fig:eig1} displays the eighth component $\mathbf{w}_8$ of PPCA and FA. 
\subsection{Fixing the Threshold for the Mahalanobis Distance}\label{subsec:mah}
\begin{wraptable}{r}{5.5cm}
	\begin{center}
		\small
		\caption{Dimensionality reduction and density estimation results.}
		\label{tab:dim_mah}
		\small
		\begin{tabular}{lcc}
			\toprule
			Method         & $K$ & $\varphi_{\max}$ \\
			\midrule
			PPCA            & 12   & 21.74\\
			FA 			    & 16   & 27.13\\
			\bottomrule
		\end{tabular}
	\end{center}
\end{wraptable}
Before starting the SBDO process we need to fix the constant term $\varphi_{\max}$ inside the constraint in Eq. \ref{eq:rpopt_sbdo3}. 
The initial step involves computing the reconstructed data $\tilde{\mathbf{\mathbf{X}}}$. 
For each $\mathbf{x}_n \in \mathbf{X}$, the conditional latent mean is computed for PPCA and FA using Eq. \ref{eq:encodePPCA} and Eq. \ref{eq:encodeFA}, respectively. Next, all the $\mathbf{z}_n$ are projected back by computing the conditional mean for the PPCA and FA models using Eq. \ref{eq:decodePPCA2} and Eq. \ref{eq:decodeFA2}, respectively.
Once we compute the matrix $\tilde{\mathbf{\mathbf{X}}}$ we calculate the squared Mahalanobis distance in Eq. \ref{eq:mah_dist} for each reconstructed geometry $\tilde{\mathbf{x}}\in\tilde{\mathbf{X}}$  . 
At the end of this process, we have a vector of size $N$, namely $\mathbf{d}^2_{M}(\tilde{\mathbf{x}})$.

In this application, the threshold $\varphi_{\max}$ is fixed with respect to 1.5 times the IQR of the values collected inside the vector $\mathbf{d}^2_{M}(\tilde{\mathbf{x}})$
\begin{equation}
    \text{IQR} = q_3 - q_1
\end{equation}
where $q_1$ and $q_3$ are the first and the third quartile.
In Tab. \ref{tab:dim_mah} we summarized the values of the threshold $\varphi_{\max}$ from the two models.

Furthermore, we conducted a simple empirical experiment to observe the impact of the density estimation process. Specifically, we randomly sampled $10,000$ latent variables $\mathbf{z}$ uniformly at random within the bounds defined in Eq. \ref{eq:rpopt_sbdo4}. 
We then projected these samples back into the data space using Eq. \ref{eq:decodePPCA2} and Eq. \ref{eq:decodeFA2} for PPCA and FA, respectively.
We computed the squared Mahalanobis distance for each of those samples denoted by $d^2_M(\mathbb{E}[\mathbf{x}|\mathbf{z}])$ and we compared the resulting probability distribution function  with that of $d^{2}_{M}(\tilde{\mathbf{x}})$, shown by the blue and green lines in Fig. \ref{fig:dist}, respectively. 
We performed this procedure for both PPCA and FA and reported the results in Fig. \ref{fig:dist}.
We can notice that the center of mass of the distribution of $d^2_M(\mathbb{E}[\mathbf{x}|\mathbf{z}])$ is shifted towards a higher value of the squared Mahalanobis distance with respect to the distribution of  $d^{2}_{M}(\tilde{\mathbf{x}})$. 
\begin{figure}[htb!]
	\centering
	\subfigure[PPCA ($K=12$)]{\label{fig:a}\includegraphics[width=8cm]{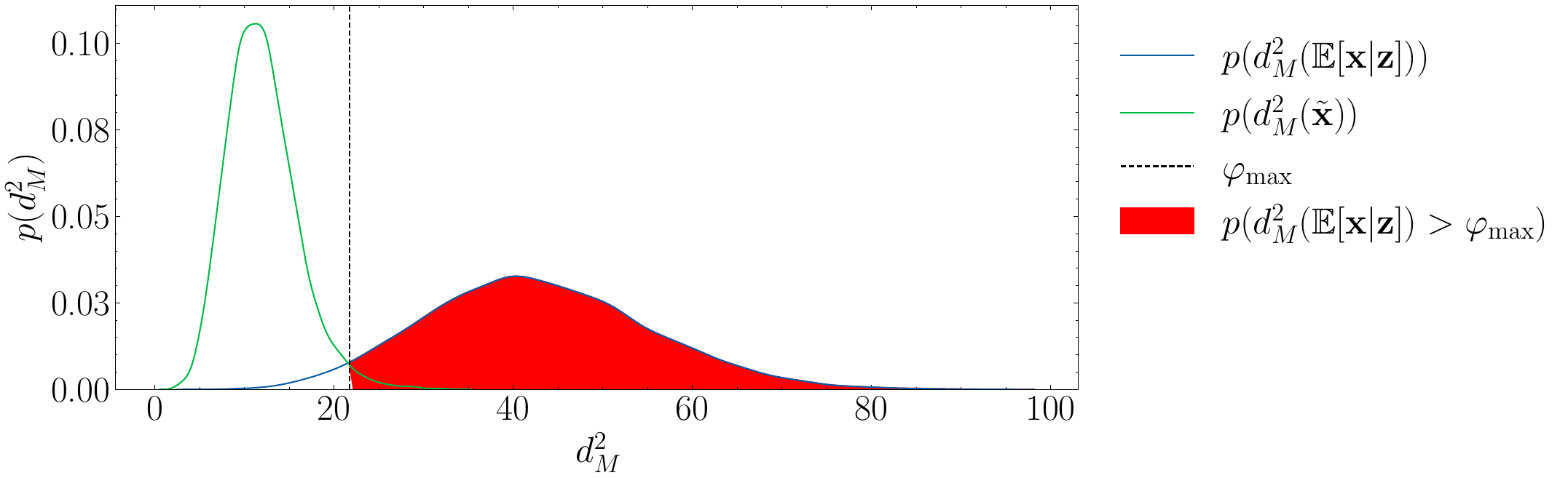}}
	\subfigure[PPCA ($K=12$)]{\label{fig:b}\includegraphics[width=4cm]{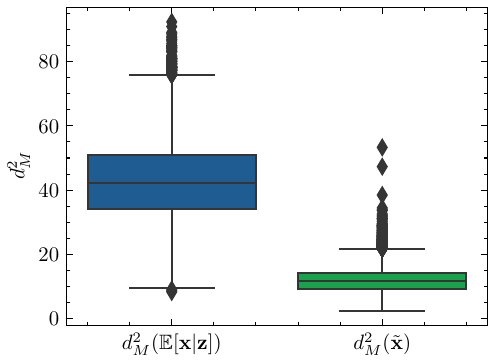}}\\
	\subfigure[FA ($K=16$)]{\label{fig:c}\includegraphics[width=8cm]{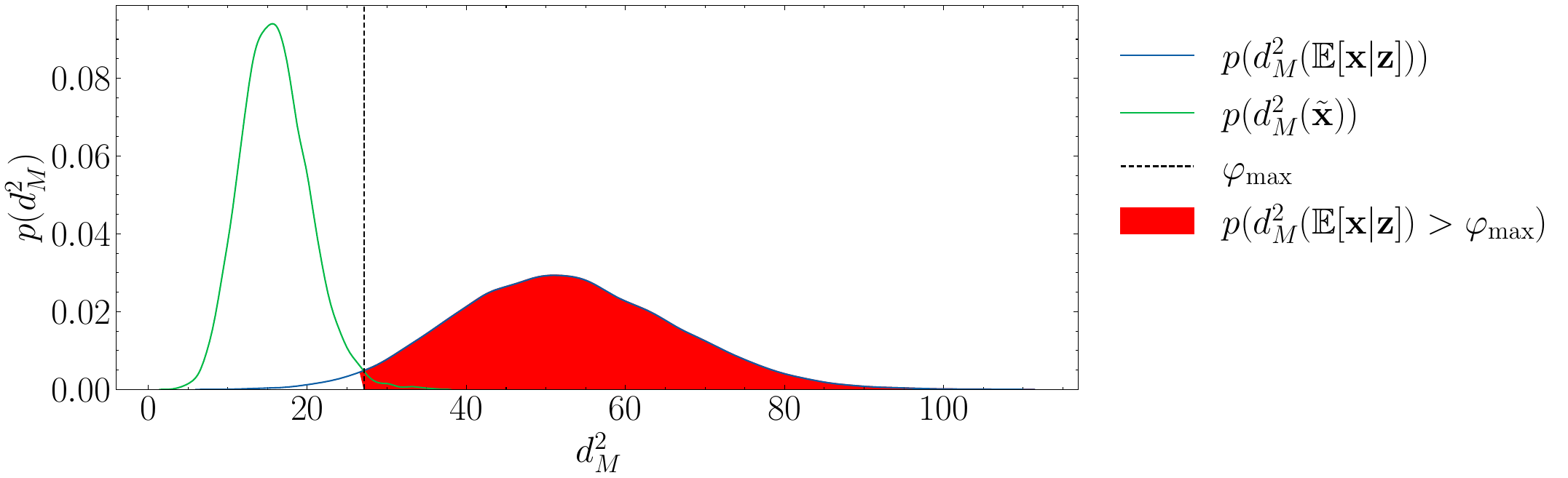}}
	\subfigure[FA ($K=16$)]{\label{fig:d}\includegraphics[width=4cm]{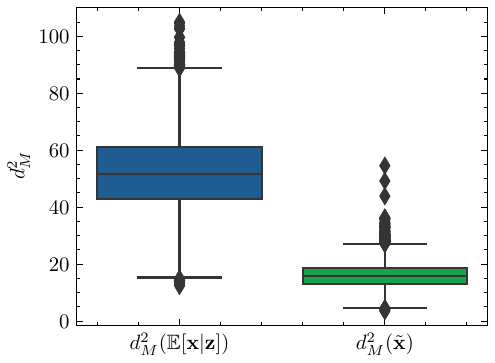}}
	\caption{Behavior in terms of the empirical PDF of the squared Mahalanobis distance (left column) and relative box plots (right column).}
	\label{fig:dist}
\end{figure}
This implies that when a latent variable $\mathbf{z}$ is randomly sampled uniformly within its bounds, it is highly probable that the resulting design will be far from the mean $\mathbf{\bar{x}}$ of the original parametrization.
To quantify this effect, we computed the probability that $d^2_{M}(\mathbb{E}[\mathbf{x}|\mathbf{z}])$ exceeds our threshold $\varphi_{\max}$, denoted by $p(d^2_{M}(\mathbb{E}[\mathbf{x}|\mathbf{z}]) > \varphi_{\max})$ and highlighted in red in Fig. \ref{fig:dist}. 
Our experiment showed that this probability was approximately $0.98$ and $0.99$ for PPCA and FA, respectively. 
Thus, if we perform a random optimization in the latent space using a uniform distribution as an optimizer,  almost all the resulting designs will be far from the mean of the original parametrization.
\begin{figure}[htb!]
	\centering
	\subfigure[$d^2_{M}(\bar{\mathbf{x}}) = 0$]{\label{fig:fro_mean}\includegraphics[width=0.23\textwidth]{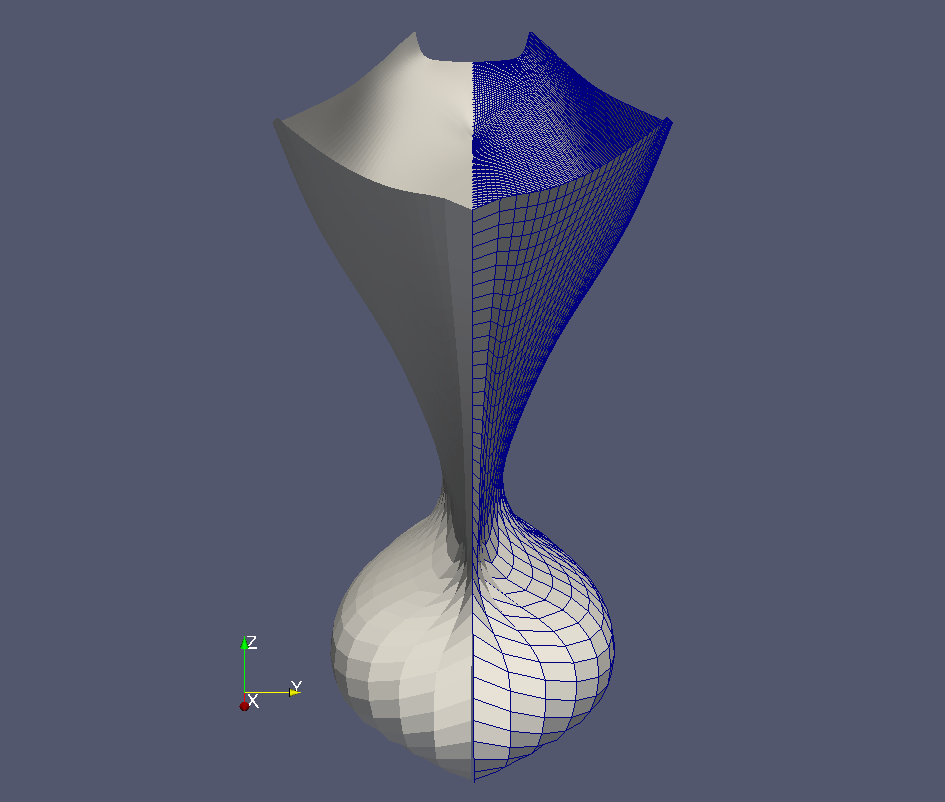}}\\
	\subfigure[$d^2_{M}(\mathbf{x}) = 9.41$]{\label{fig:ano_1}\includegraphics[width=0.23\textwidth]{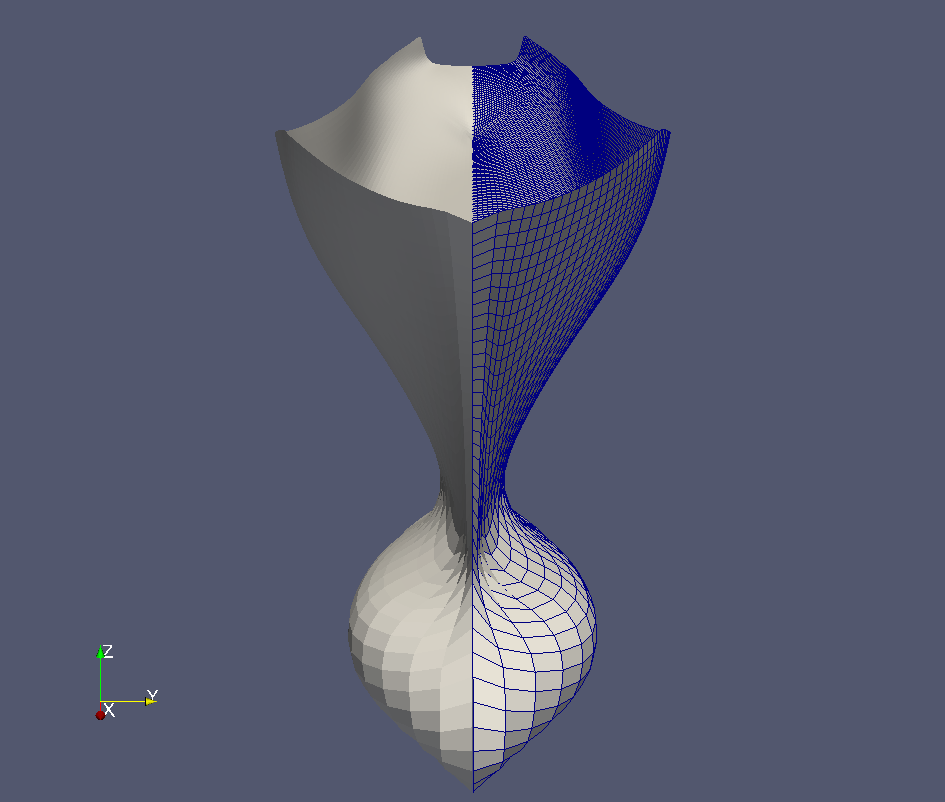}} 
	\subfigure[$d^2_{M}(\mathbf{x}) = 62.41$]{\label{fig:ano_2}\includegraphics[width=0.23\textwidth]{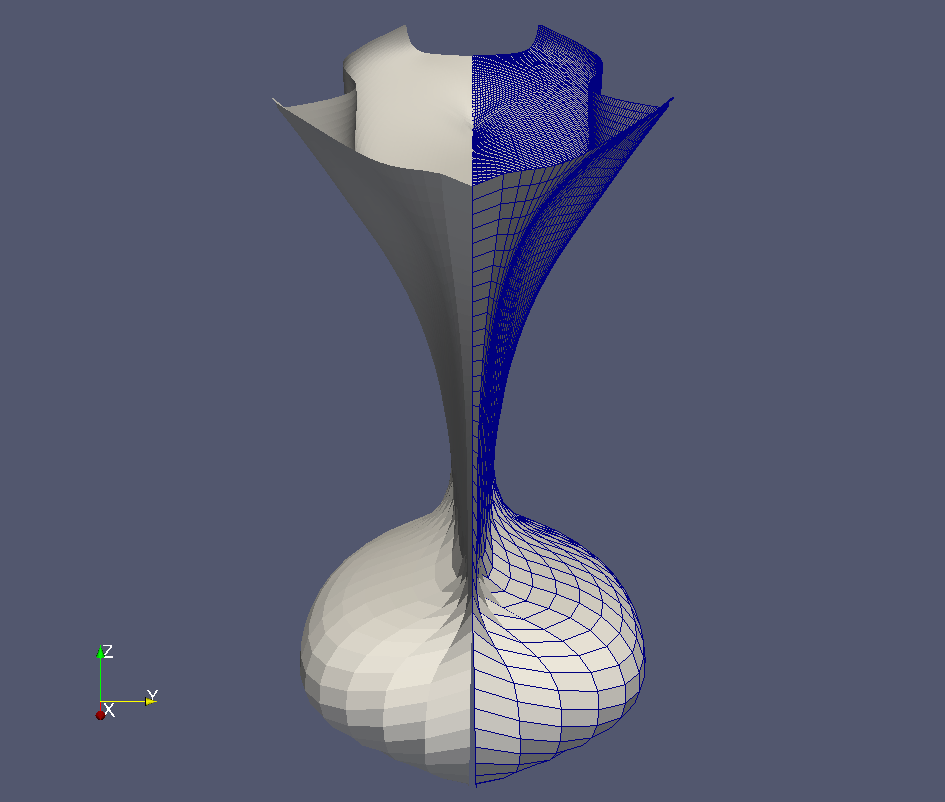}} 
	\subfigure[$d^2_{M}(\mathbf{x}) = 236.15$]{\label{fig:ano_3}\includegraphics[width=0.23\textwidth]{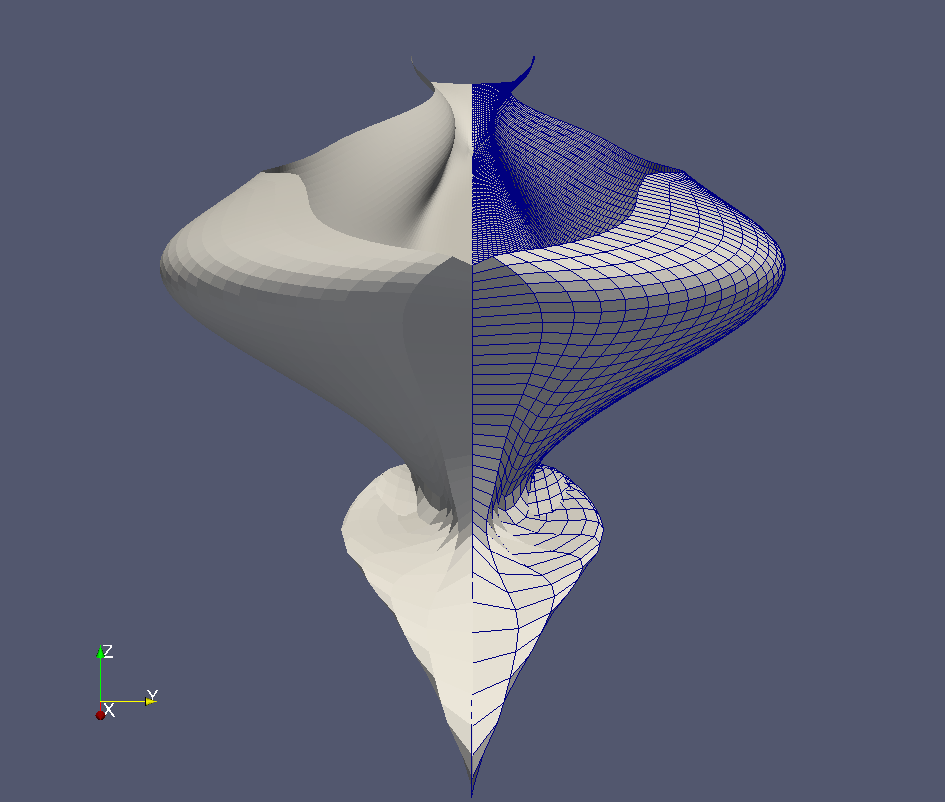}} 
	\subfigure[$d^2_{M}(\mathbf{x}) =688.94$]{\label{fig:ano_4}\includegraphics[width=0.23\textwidth]{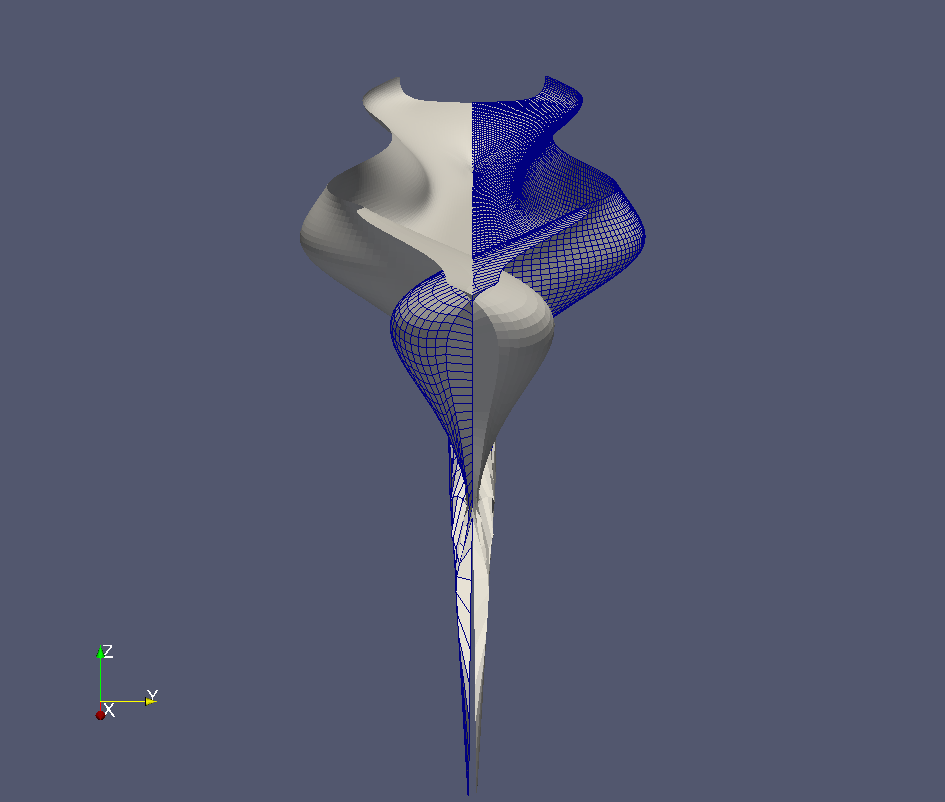}} 
	\caption{Example of geometries $\mathbf{x}$ and relative (squared) Mahalanobis distance from the mean $\bar{\mathbf{x}}$.}
	\label{fig:fa_ppca_anom_unif}
\end{figure}
In Fig. \ref{fig:dist}, we can also observe that the empirical PDF of $p(d^2_{M}(\tilde{\mathbf{x}}))$ resembles a chi-square distribution with $K$ degrees of freedom as additional information that our dataset distribution follows approximately a Gaussian distribution (see Lemma \ref{lemma:chi}).

Finally, to investigate the relationship between the value of the Mahalanobis distance and the regularity of the geometrical modification obtained, we generated again some random geometries sampling the latent variables from a uniform distribution both for FA and PPCA. 
In Fig. \ref{fig:fa_ppca_anom_unif}, is possible to visualize some examples of the designs obtained from this experiment. 
In Fig. \ref{fig:fro_mean}, there is the mean geometry. 
In Fig. \ref{fig:ano_1}, the design has a quite strong similarity with the mean geometry apart in the bulb region where it is slightly thinner. 
In the second geometry in Fig. \ref{fig:ano_2}, it is possible to observe stronger curvatures on the hull. 
This behavior is heavier for the third geometry in Fig. \ref{fig:ano_3} since it gains a very sharp bulb and an excessively large beam on the bow region of the hull. The design in Fig. \ref{fig:ano_4}, is even more extreme as it no longer resembles the geometrical properties of a hull.

In general, stronger and anomalous shape modifications may be obtained as the Mahalanobis distance or the uncertainty of our model about a particular design increases.
\subsection{Optimization Problem}
The problem formulation for the shape optimization of the DTMB 5415 is the following
\begin{equation}\label{eq:5415problem}
	\begin{array}{rll}
		\min\limits_{\mathbf{v}}     & R_\mathrm{T}(\mathbf{x(\mathbf{v})}) & \mathrm{with} \,\,\, \mathbf{v}\in\mathbb{R}^M\\
		& L_{\rm pp}({\mathbf{x(\mathbf{v})}})= L_{\rm pp_0}\\
		& \nabla(\mathbf{x(\mathbf{v})})= \nabla_0,\\
		& |\Delta B(\mathbf{x(\mathbf{v})})|\leq 0.05B_0,\\
		& |\Delta T(\mathbf{x(\mathbf{v})})|\leq 0.05T_0, \\
		& V(\mathbf{x(\mathbf{v})}) \geq V_0,\\
		& v_m^{\text{lb}}\leq v_m \leq v_m^{\text{ub}} &  m=1,\dots,M \\
	\end{array}
\end{equation}
where $R_{\rm T}$ is the calm-water resistance at ${\rm Fr}=0.28$ (equivalent to 20 kn for the full-scale ship). 
Equality constraints are defined for the length between perpendiculars ($L_{\rm pp}$) and the displacement ($\nabla$). 
Inequality constraints include $5\%$ of maximum variation of the beam ($B$) and the drought ($T$) and dedicated volume for the sonar dome ($V$), corresponding to 4.9 m diameter and 1.7 m length (cylinder). Subscript `0' indicates original-geometry values. 
Finally, to satisfy the equality constraints, scaling equations are applied to the modified geometry as in \cite{dagostino2020design}.
Equality and inequality constraints on the geometry deformations are based on \cite{diez2018-AVT204-3}.

Using the reduced-dimensionality design space with probabilistic linear latent variable models PPCA and FA, the optimization problem becomes
\begin{equation}\label{eq:5415problem_dr}
	\begin{array}{rll}
		\min\limits_{\mathbf{z}}     & R_\mathrm{T}(\mathbf{x(\mathbf{z})}) & \mathrm{with} \,\,\, \mathbf{z}\in\mathbb{R}^K\\ 
		& L_{\rm pp}({\mathbf{x(\mathbf{z})}})= L_{\rm pp_0}\\
		& \nabla(\mathbf{x(\mathbf{z})})= \nabla_0,\\
		& |\Delta B(\mathbf{x(\mathbf{z})})|\leq 0.05B_0,\\
		& |\Delta T(\mathbf{x(\mathbf{z})})|\leq 0.05T_0, \\
		& V(\mathbf{\mathbf{x(\mathbf{z})}}) \geq V_0,\\
		&d^{2}_{M}(\mathbf{x}(\mathbf{z}))  \leq \varphi_{\max} \\
		& z_k^{\text{lb}}\leq z_k \leq z_k^{\text{ub}} &  k=1,\dots,K \\
	\end{array}
\end{equation}
Where the bounds for the latent variable are computed by taking the maximum and the minimum of each column component $z_{k}$ of $\mathbf{Z}$ and the value of $\varphi_{\max}$ given in Tab. \ref{tab:dim_mah}. 

The hidden constraints are treated as follows: in case one or more constraints are not satisfied, the geometry will not enter the simulator to obtain the function evaluation, instead, the following pseudo value for the objective function is returned to the optimizer as
\begin{equation}
	f(\mathbf{x}) = h + \psi * \sum_{j=1}^{J}\max\{0, g_j(\mathbf{x})-a_j\}
\end{equation}
where $h=50$, $\psi = 1000$ and $g_j$ and $a_j$ represent the $j$th generic geometrical constraint and its constant term.
\subsection{Hydrodynamic Solver}
The calm-water total resistance is evaluated using the linear potential flow code WARP (Wave Resistance Program), developed at CNR-INM. Wave resistance computations are based on the Dawson (double-model) linearization \cite{dawson1977-NSH}. 
The frictional resistance is estimated using a flat-plate approximation, based on the local Reynolds number \cite{schlichting2000-BLT}. The ship balance (sinkage and trim) is fixed.
Details of equations, numerical implementations, and validation of the numerical solver are given in \cite{bassanini1994-SMI}. 
The convergence grid study can be found in \cite{serani2016-AOR}.

Simulations are performed for the right demi-hull, taking advantage of symmetry about the $\mathsf{x}\mathsf{z}$-plane. The computational domain for the free-surface is defined within $1L_\mathrm{{pp}}$ upstream, $3L_\mathrm{{pp}}$ downstream, and $1.5L_\mathrm{{pp}}$ sideways, for a total of $150\times 44$ grid nodes. The associated hull grid is formed by $180\times 40$ nodes.
\subsection{Numerical Results}
The global optimization process is performed using the DIRECT algorithm \cite{jones1993-JOTA} and Bayesian global optimization \cite{mockus1978application} based on Gaussian processes (GP) \cite{williams2006gaussian}. 
The stopping criteria is the maximum number of function evaluations (i.e. simulation runs) fixed at
\begin{equation}
    I_{\text{max}}=500
\end{equation}

For the GP, we used the lower confidence bound (GP-LCB) \cite{cox1992statistical} as the acquisition function, with the parameter that balances the exploitation/exploration fixed to $\kappa = 1$. 
The LCB is optimized with a multi-start quasi-Newton method BFGS \cite{broyden1970convergence}. 
In this work, a Matérn kernel function \cite{genton2001classes} with a parameter $\nu = 3/2$ has been used for the GP, while its length scale is optimized during the MLE procedure. 
We used a central composite design \cite{box1957multi} without factorial points composed by $2D +1$ points for the initialization of the Bayesian optimization procedure.

In Tab. \ref{tab:results} we reported the numerical results for each optimization algorithm and the three design spaces produced by PPCA ($K=12$), FA ($K=16$), and the full dimensional design space FDS ($M=21$) in terms of the total resistance $R_T(\text{N})$ and percentage reduction of the $R_T[\text{N}]$ with respect to the original geometry. 
The overall best function value obtained is given by the DIRECT algorithm iterating in the subspace produced by the PPCA. 
\begin{table}[htb!]
	\begin{center}
		\caption{Numerical results at the end of the SBDO process.}
		\label{tab:results}
		\small
		\begin{tabular}{lccc}
			\toprule
			Method         & $K$ &  DIRECT $R_T[\text{N}]$ & GP-LCB $R_T[\text{N}]$ \\
			\midrule
			PPCA            & 12   & 34.04 (-18.5\%)  & 37.63 (-9.9\%) \\
			FA 			    & 16   & 35.35 (-15.4\%)  & 38.03 (-9.0\%) \\
			FDS 			& 21   & 40.10 (-4.0\%) & 40.56 (-2.9\%) \\
			\bottomrule
		\end{tabular}
	\end{center}
\end{table}

At the end of the SBDO process, the convergence speed to the achieved optimal function value is evaluated in Fig. \ref{fig:opt_conv}, showing the best objective function value obtained across the iterations from the global optimization algorithms. 
It is evident from these figures that optimizing in the latent space shows a faster convergence rate compared to the FDS. This is true for both algorithms, DIRECT and GP-LCB, as shown in Fig. \ref{fig:conv_direct} and in Fig. \ref{fig:conv_gplcb}, respectively.
\begin{figure}[htb!]
	\centering
	\subfigure[DIRECT]{\label{fig:conv_direct}\includegraphics[width=6cm]{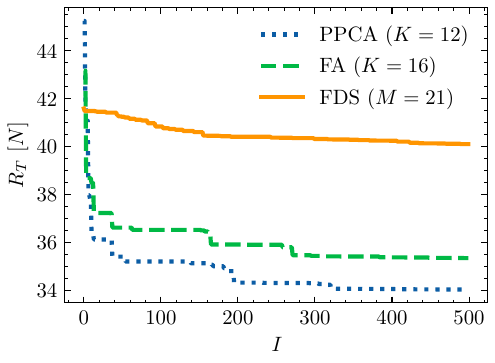}}
	\subfigure[GP-LCB]{\label{fig:conv_gplcb}\includegraphics[width=6cm]{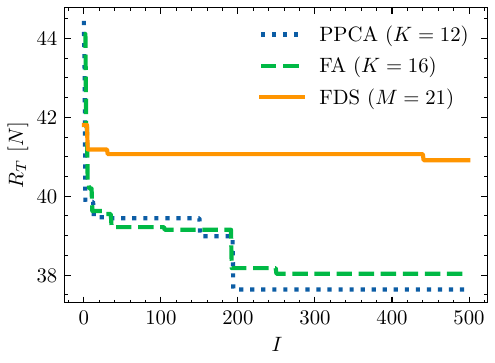}}
	\caption{Convergence of DIRECT and GP-LCB during the optimization in FA, PPCA, and FDS spaces. }
	\label{fig:opt_conv}
\end{figure}

In Figures, \ref{fig:fa_ppca_opt_x} and \ref{fig:fa_ppca_front_opt_x}, as well as Fig. \ref{fig:fa_ppca_rear_opt_x}, the quality of the optimal geometries produced through FA and PPCA can be evaluated from different viewpoints, along with their Mahalanobis distances. 
Overall, is possible to notice a good geometrical regularity across all the optimal solutions. 
Only the optimal solutions found by DIRECT reveal a bulb region slightly sharper (Fig. \ref{fig:front_opt_b} and \ref{fig:front_opt_d}) with a marginally more noticeable curvature over the hull only for the FA case (Fig. \ref{fig:rear_opt_d}). 
Let's observe that those geometries obtained a value of the Mahalanobis distance respectively of $d^2_{M}(\mathbf{x}^\star) =21.73$ and $d^2_{M}(\mathbf{x}^\star) =27.12$ where their maximum value allowed given by $\varphi_{\max} = 21.74$ and  $\varphi_{\max} = 27.13$ for PPCA and FA respectively (Tab. \ref{tab:dim_mah}). 
This outcome indicates that DIRECT tried to generate hull designs with a more pointed bulb. However, the Mahalanobis distance constraint in Eq. \ref{eq:rpopt_sbdo3} limits significant and abnormal alterations to that specific section of the hull.
%
\begin{figure}[htb!]
	\centering
	\subfigure[$d^2_{M}(\bar{\mathbf{x}}) = 0$]{\label{fig:opt_a}\includegraphics[width=0.23\textwidth]{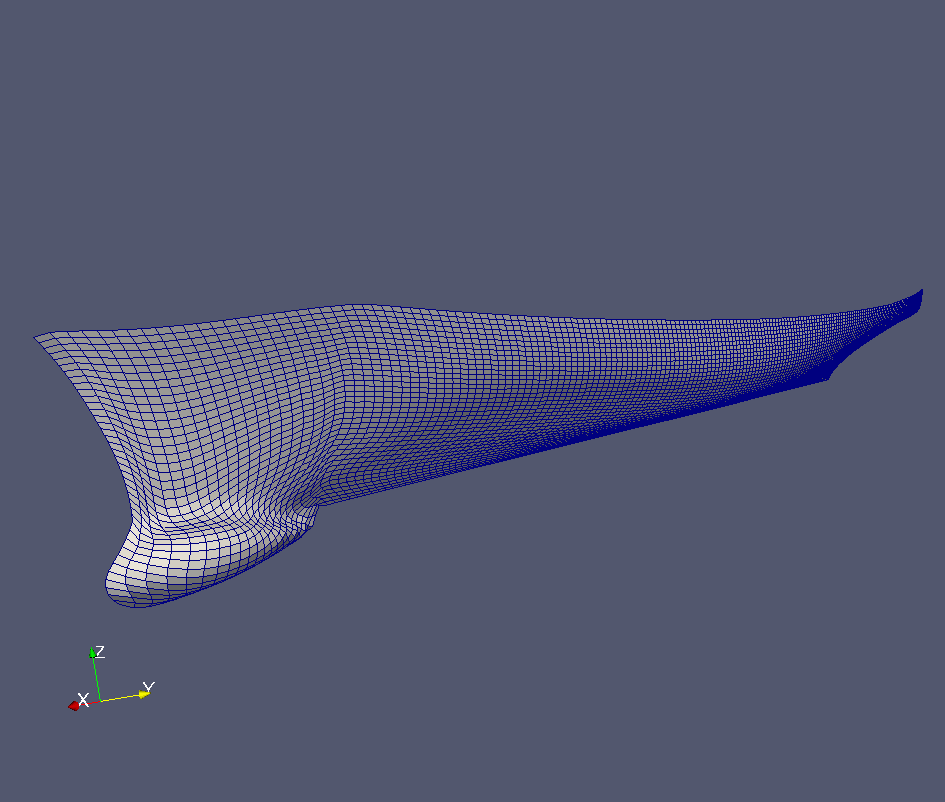}}\\
	\subfigure[$d^2_{M}(\mathbf{x}^\star) = 21.73$]{\label{fig:opt_b}\includegraphics[width=0.23\textwidth]{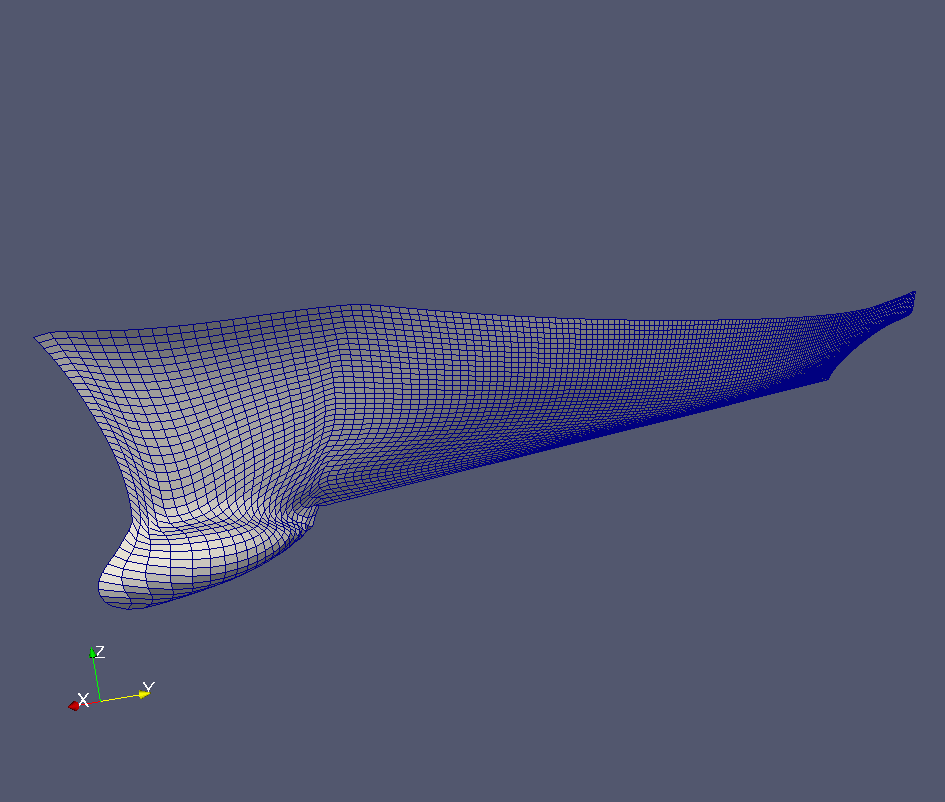}} 
	\subfigure[$d^2_{M}(\mathbf{x}^\star) = 6.65$]{\label{fig:opt_c}\includegraphics[width=0.23\textwidth]{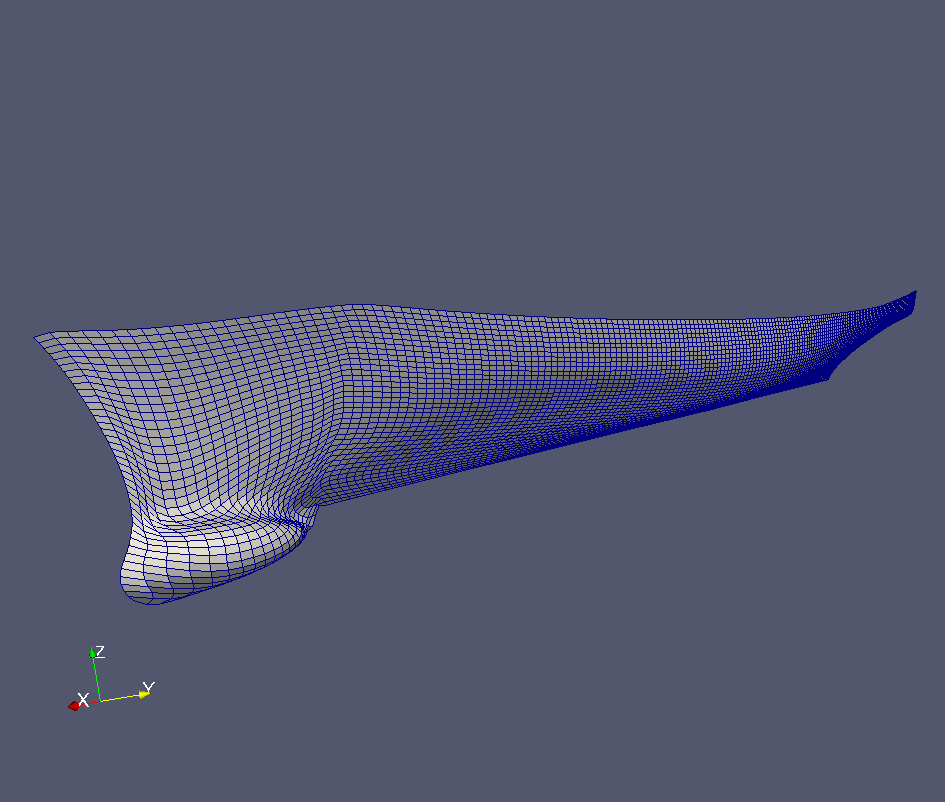}} 
	\subfigure[$d^2_{M}(\mathbf{x}^\star) =27.12$]{\label{fig:opt_d}\includegraphics[width=0.23\textwidth]{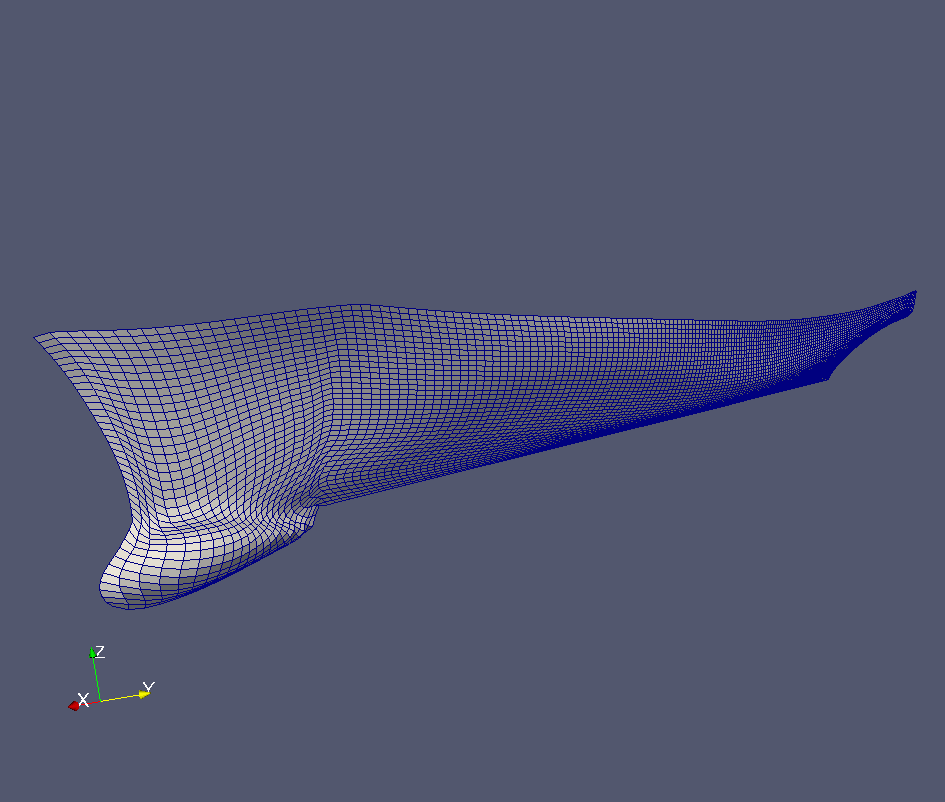}} 
	\subfigure[$d^2_{M}(\mathbf{x}^\star) =16.17$]{\label{fig:opt_e}\includegraphics[width=0.23\textwidth]{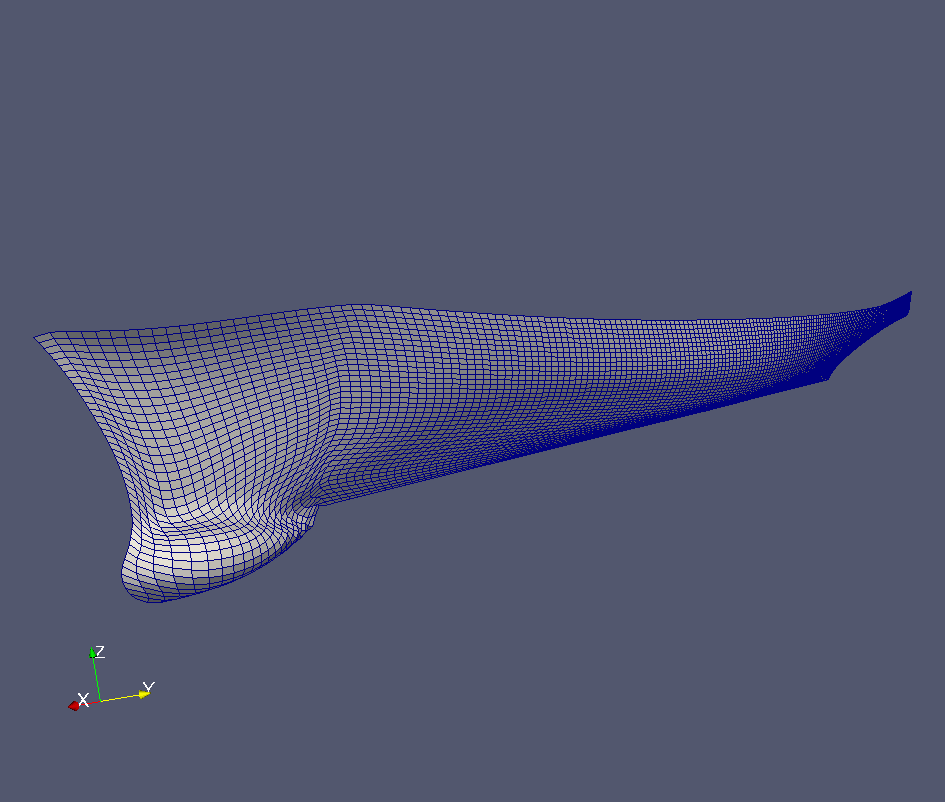}}
	\caption{Optimal solutions $\mathbf{x}^\star$ and relative Mahalanobis distance for PPCA/DIRECT (b), PPCA/GP-LCB (c), FA/DIRECT (d) and FA/GP-LCB (e). The mean geometry $\bar{\mathbf{x}}$ in (a).}
	\label{fig:fa_ppca_opt_x}
\end{figure}
\begin{figure}[htb!]
	\centering
	\subfigure[$d^2_{M}(\bar{\mathbf{x}}) = 0$]{\label{fig:front_opt_a}\includegraphics[width=0.23\textwidth]{figs/MEAN_FRONT_BULB.png}}\\
	\subfigure[$d^2_{M}(\mathbf{x}^\star) = 21.73$]{\label{fig:front_opt_b}\includegraphics[width=0.23\textwidth]{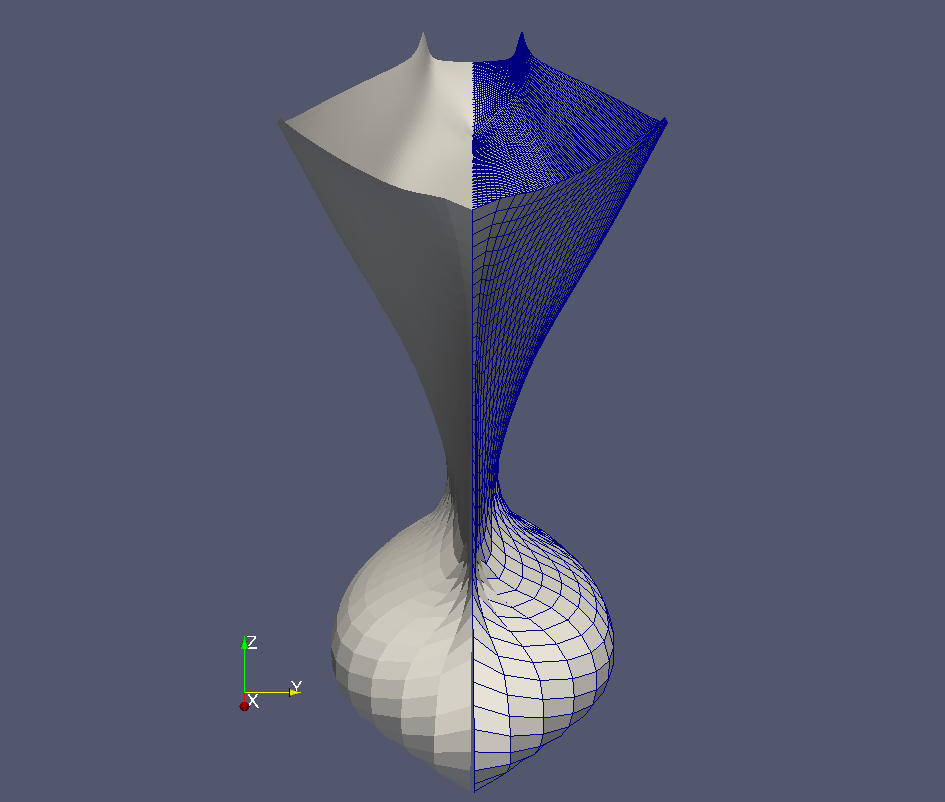}} 
	\subfigure[$d^2_{M}(\mathbf{x}^\star) = 6.65$]{\label{fig:front_opt_c}\includegraphics[width=0.23\textwidth]{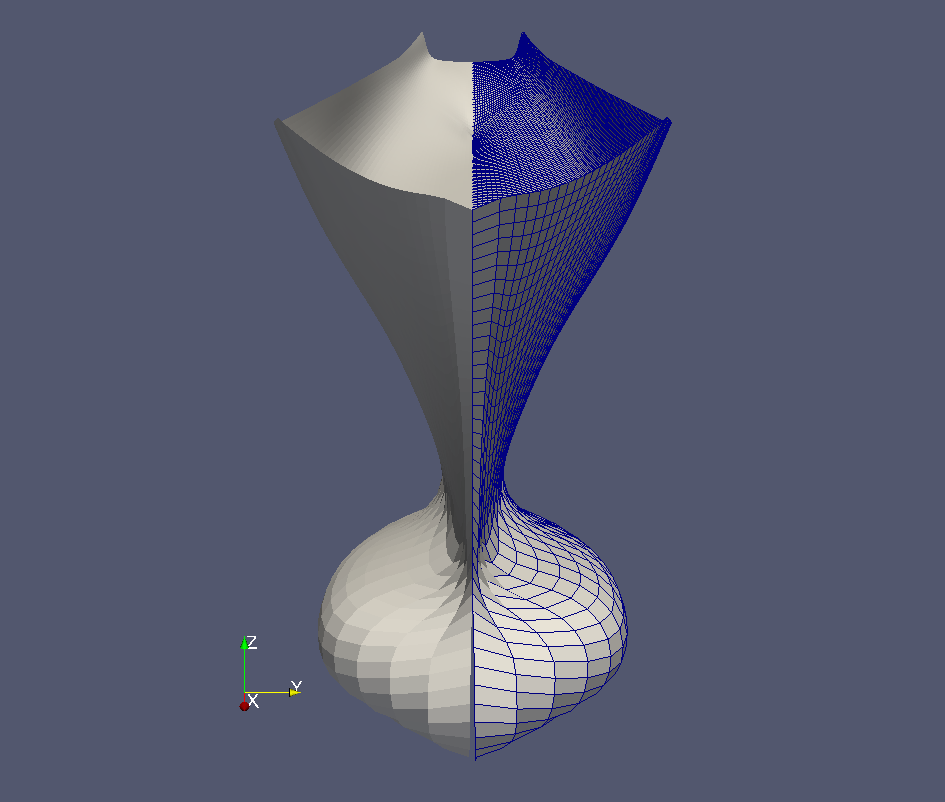}} 
	\subfigure[$d^2_{M}(\mathbf{x}^\star) =27.12$]{\label{fig:front_opt_d}\includegraphics[width=0.23\textwidth]{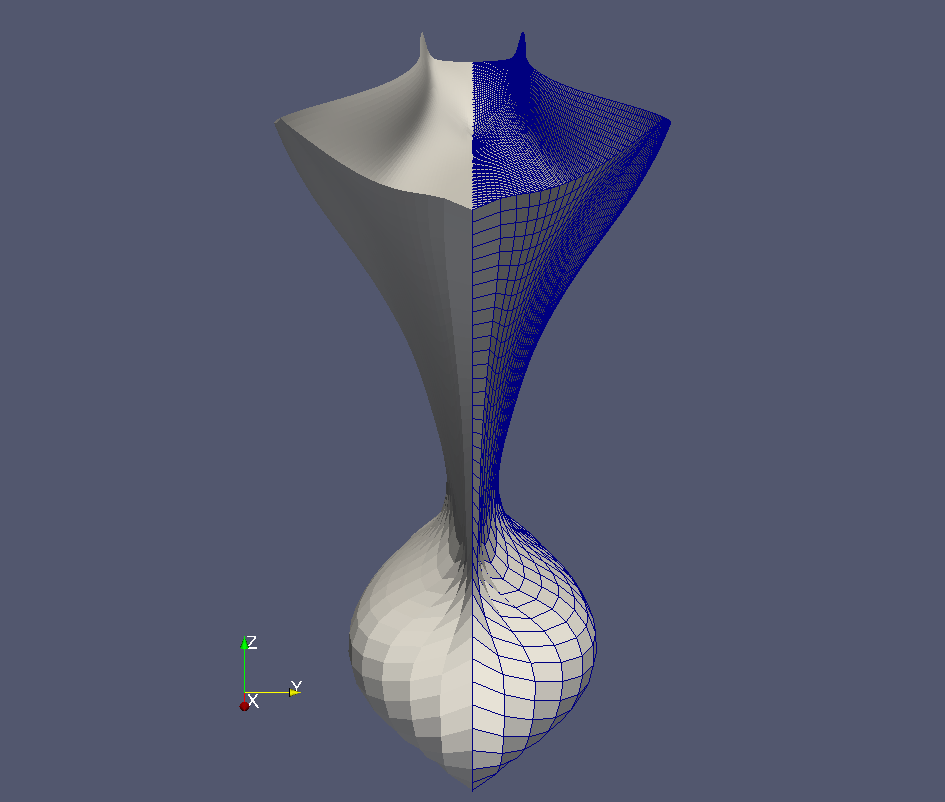}} 
	\subfigure[$d^2_{M}(\mathbf{x}^\star) =16.17$]{\label{fig:front_opt_e}\includegraphics[width=0.23\textwidth]{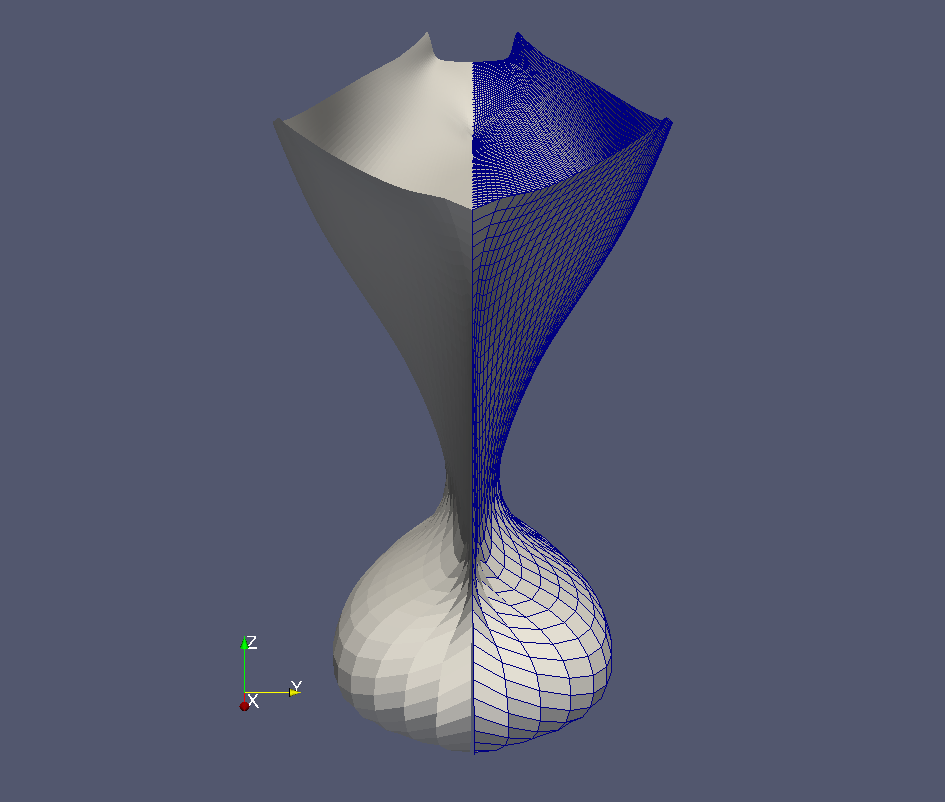}}
	\caption{Optimal solutions (bow region) $\mathbf{x}^\star$ and relative  Mahalanobis distance for PPCA/DIRECT (b), PPCA/GP-LCB (c), FA/DIRECT (d) and FA/GP-LCB (e). The mean geometry $\bar{\mathbf{x}}$ in (a).}
	\label{fig:fa_ppca_front_opt_x}
\end{figure}
\begin{figure}[htb!]
	\centering
	\subfigure[$d^2_{M}(\bar{\mathbf{x}}) = 0$]{\label{fig:rear_opt_a}\includegraphics[width=0.23\textwidth]{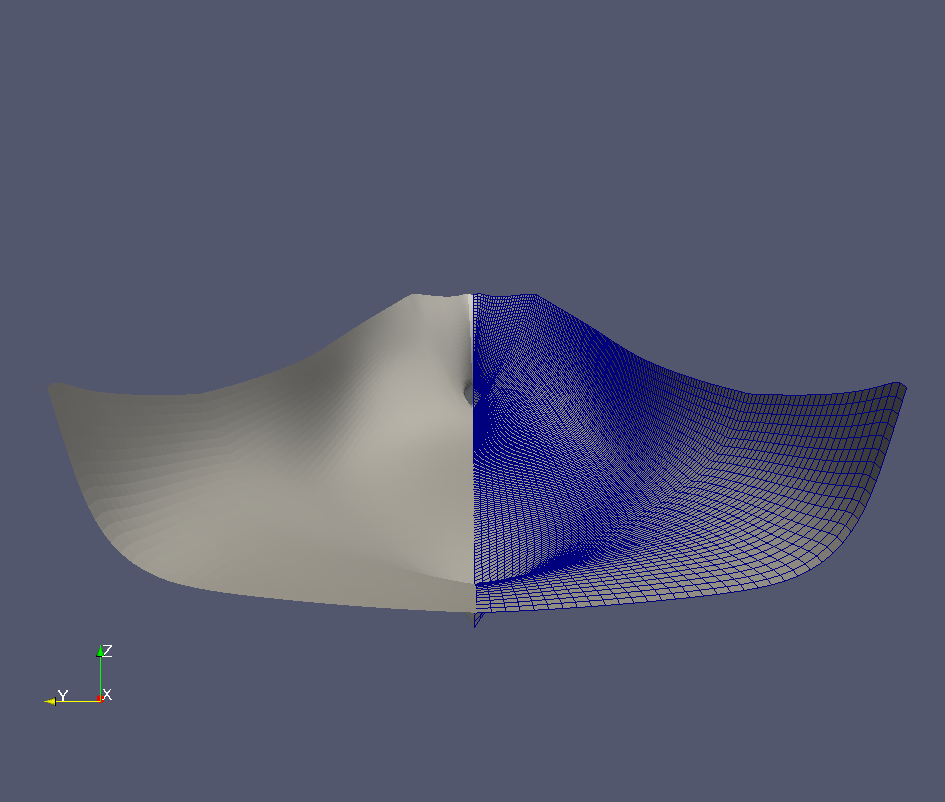}}\\
	\subfigure[$d^2_{M}(\mathbf{x}^\star) = 21.73$]{\label{fig:rear_opt_b}\includegraphics[width=0.23\textwidth]{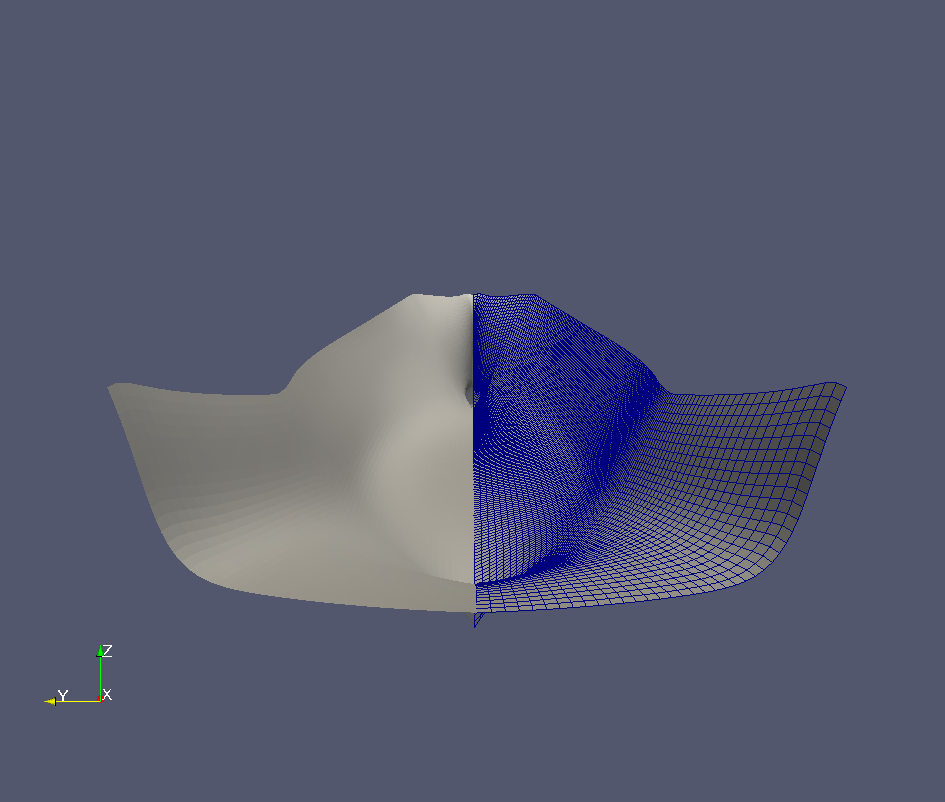}} 
	\subfigure[$d^2_{M}(\mathbf{x}^\star) = 6.65$]{\label{fig:rear_opt_c}\includegraphics[width=0.23\textwidth]{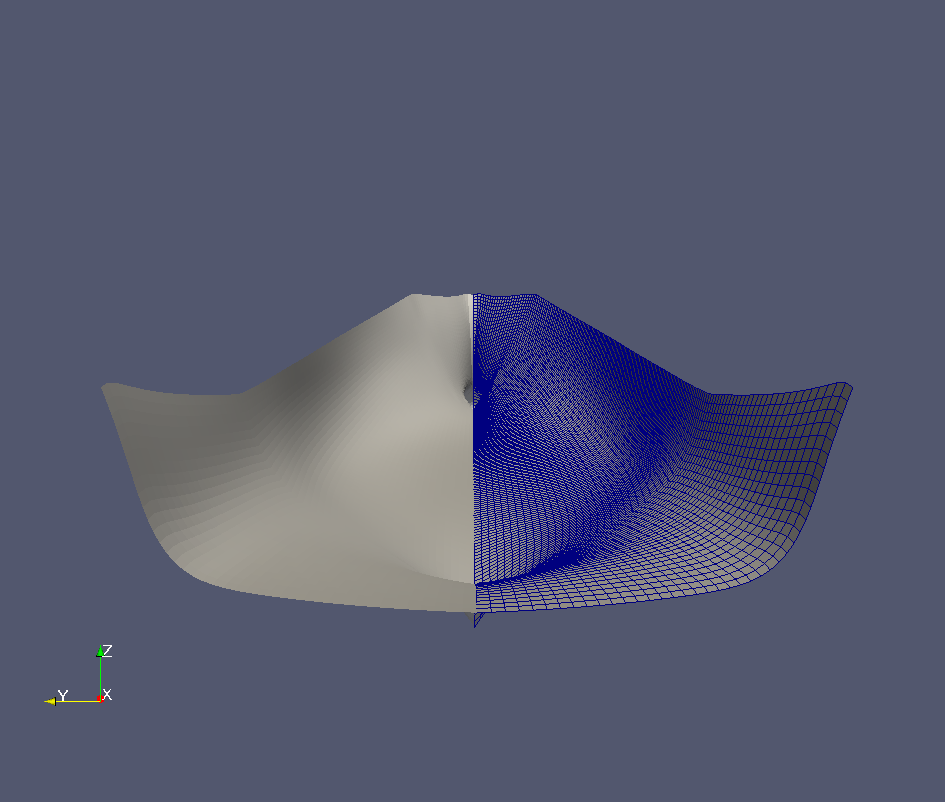}} 
	\subfigure[$d^2_{M}(\mathbf{x}^\star) =27.12$]{\label{fig:rear_opt_d}\includegraphics[width=0.23\textwidth]{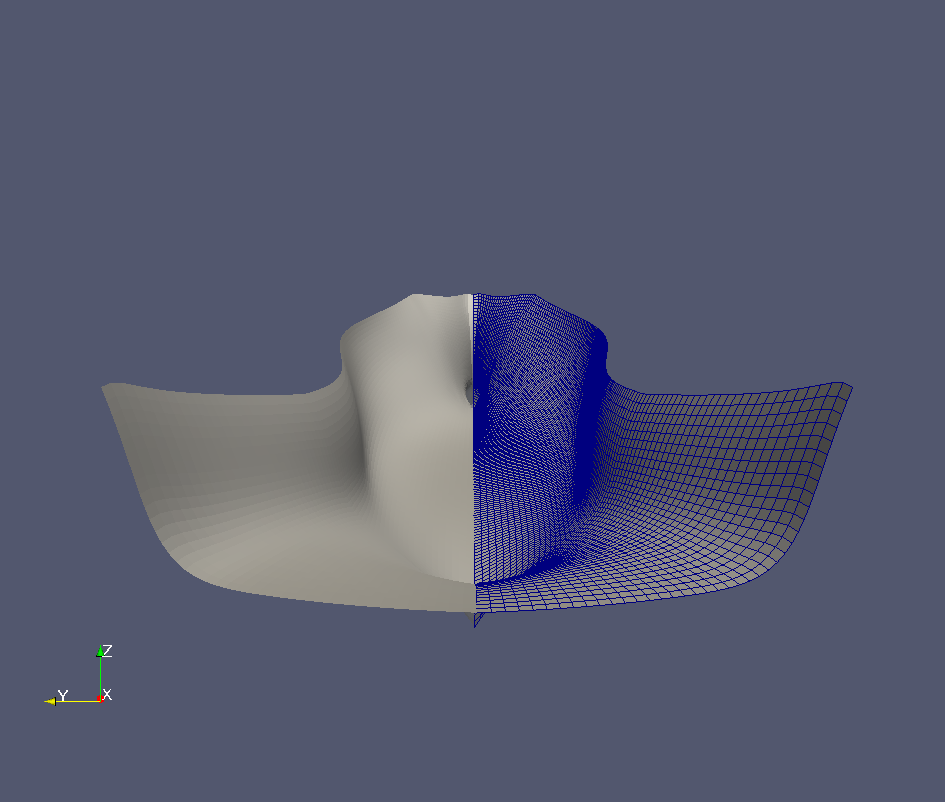}} 
	\subfigure[$d^2_{M}(\mathbf{x}^\star) =16.17$]{\label{fig:rear_opt_e}\includegraphics[width=0.23\textwidth]{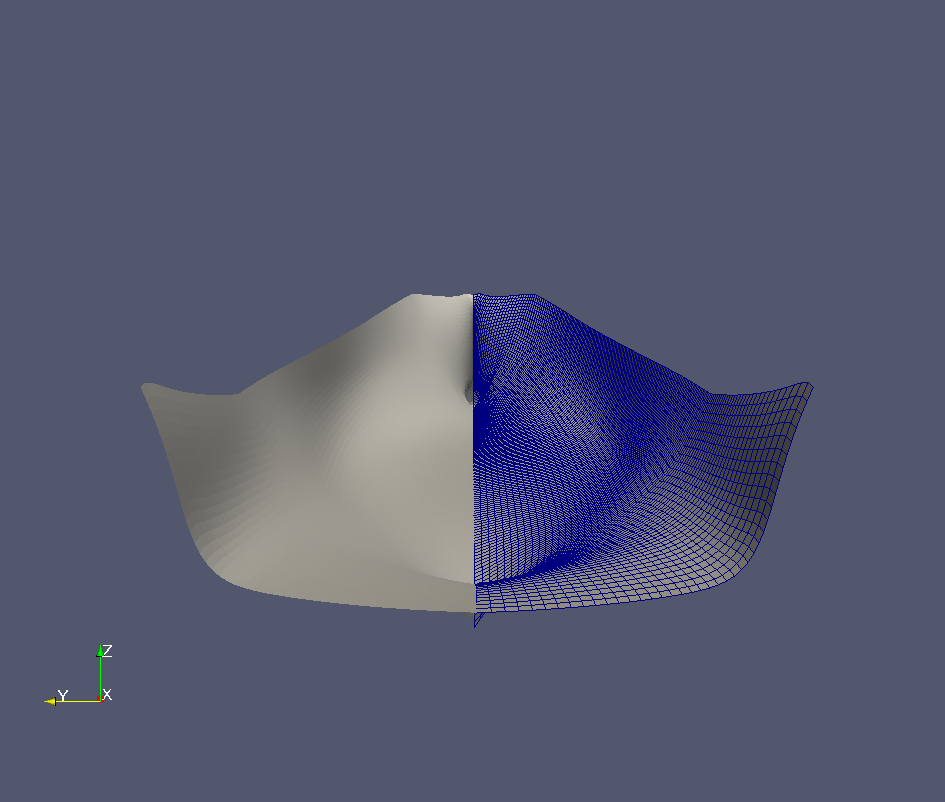}}
	\caption{Optimal solutions (stern region) $\mathbf{x}^\star$ and relative Mahalanobis distance for PPCA/DIRECT (b), PPCA/GP-LCB (c), FA/DIRECT (d) and FA/GP-LCB (e). The mean geometry $\bar{\mathbf{x}}$ in (a).}
	\label{fig:fa_ppca_rear_opt_x}
\end{figure}

We applied the SBDO process also for the PCA with $K=12$. In Fig. \ref{fig:pca_conv}, we can observe the convergence of the two optimizers used in this application using the PCA subspace.
\begin{figure}[htb!]
	\centering
	\subfigure[DIRECT]{\label{fig:conv_pca_direct}\includegraphics[width=6cm]{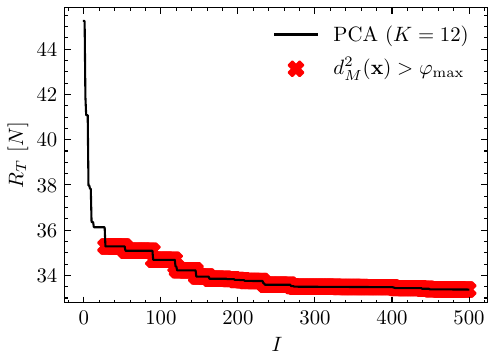}}
	\subfigure[GP-LCB]{\label{fig:conv_pca_gplcb}\includegraphics[width=6cm]{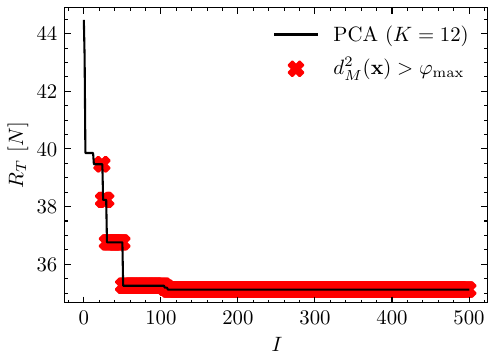}}
	\caption{Convergence of DIRECT and GP-LCB during the optimization in PCA space. In red a geometry that does not satisfy constraint based on the Mahalanobis distance.}
	\label{fig:pca_conv}
\end{figure}
To evaluate if the geometries generated by the SBDO process using the PCA subspace are within the distribution of the training data, we calculated the squared Mahalanobis distance using the inverse of the covariance matrix $\mathbf{C}^{-1}$ estimated by the PPCA model.
\begin{figure}[htb!]
	\centering
	\subfigure[DIRECT]{\label{fig:mah_pca_pdf_direct}\includegraphics[width=12cm]{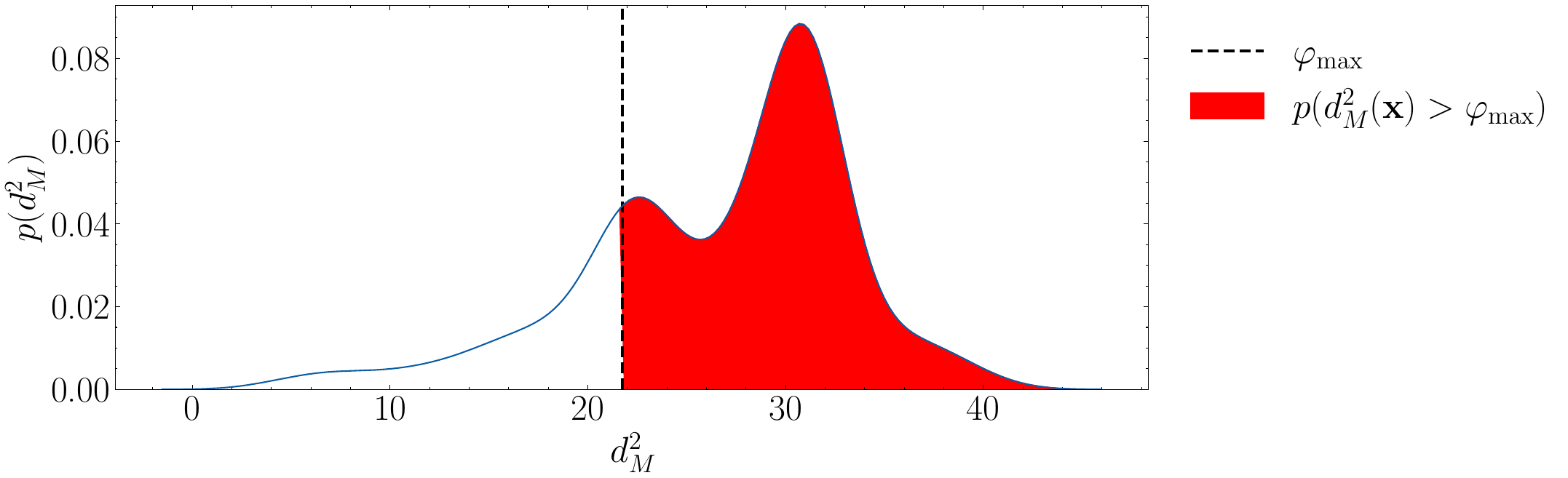}}
	\subfigure[GP-LCB]{\label{fig:mah_pca_pdf_gplcb}\includegraphics[width=12cm]{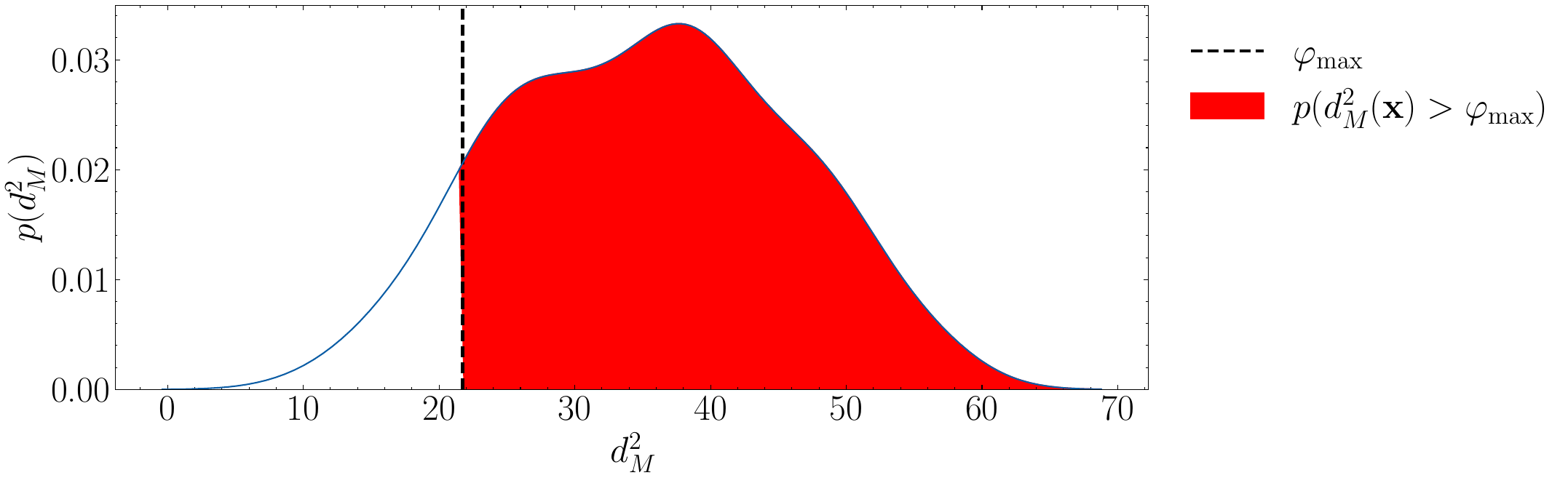}}
	\caption{Empirical PDF for the squared Mahalanobis distance for the geometries projected back from the PCA subspace for DIRECT and GP-LCB.}
\label{fig:mah_pca_pdfopt}
\end{figure}
Our findings suggest that most of the designs generated from PCA fail to satisfy the PPCA Mahalanobis distance constraint in Eq. \ref{eq:rpopt_sbdo3}. 
This is evidenced in Fig. \ref{fig:conv_pca_direct} and in Fig. \ref{fig:conv_pca_gplcb}, where the red cross highlights the geometries with the best current objective function value that do not satisfy the constraint based on the Mahalanobis distance.
Fig. \ref{fig:mah_pca_pdfopt} displays the empirical PDF function for the squared Mahalanobis distance of the geometries generated using DIRECT and GP-LCB optimizers. 
For GP-LCB, approximately $88.4\%$ of geometrically feasible geometries produced by PCA do not satisfy the PPCA Mahalanobis distance constraint, whereas, for DIRECT, this fraction is approximately $81.6\%$ of the total number of function evaluations performed. 
This result may suggest that most simulation runs are wasted on potentially anomalous geometries. 
To investigate this possibility, in Fig. \ref{fig:comp_direct_ano} and in Fig. \ref{fig:comp_bgo_ano} there is a comparison between a design (geometrically feasible) sampled from DIRECT, GP-LCB, and the mean geometry respectively. 
In our analysis, we discovered that the two optimizers followed distinct trajectories within the latent space. 
The DIRECT algorithm primarily targeted modifications to the bulb region to find an optimal solution, whereas GP-LCB placed greater emphasis on altering the stern region of the hull. 
Hence, many designs generated by DIRECT show a behavior similar to the geometry shown in Fig. \ref{fig:pca_direct_bulb_ano}, where most of the bulb volume is concentrated in its rear part producing an exaggeratedly sharp configuration, similar to the one already discussed in Fig. \ref{fig:ano_3}. In parallel, the GP-LCB sampled many designs close to the one shown in Fig. \ref{fig:pca_bgo_rear_ano} where an abnormally hard inflection with an acute angle dominates the stern part of the hull.

\begin{figure}[htb!]
	\centering
	\subfigure[$d^2_{M}(\mathbf{x}) = 39.66$]{\label{fig:pca_direct_bulb_ano}\includegraphics[width=0.47\textwidth]{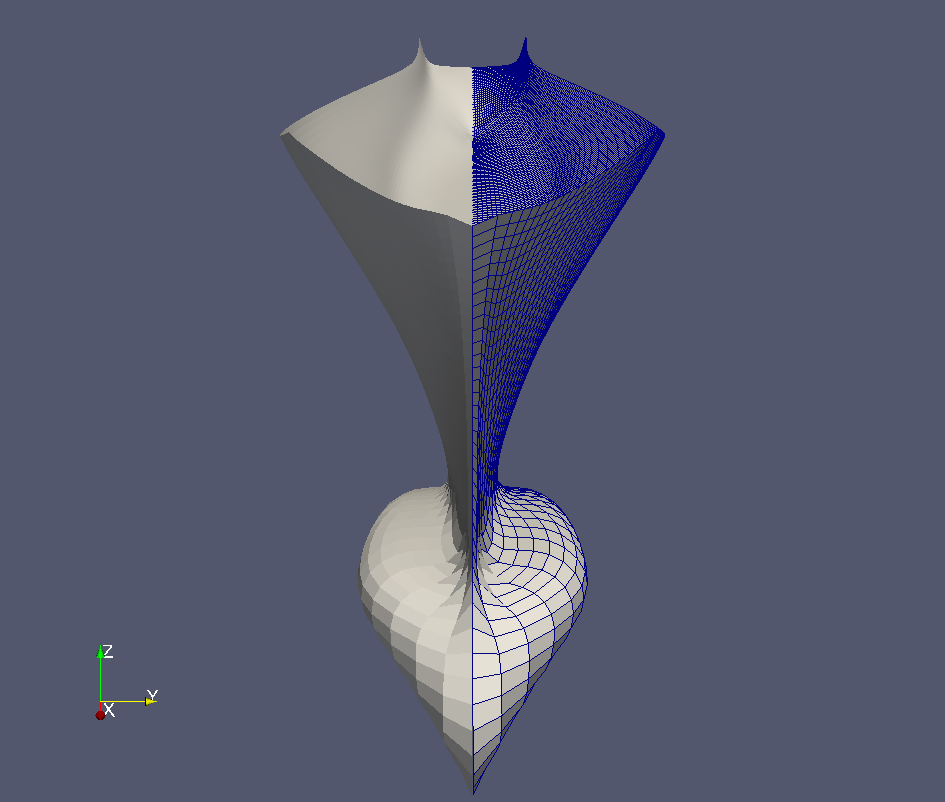}}
	\subfigure[$d^2_{M}(\bar{\mathbf{x}}) = 0$]{\label{fig:mean_bulb}\includegraphics[width=0.47\textwidth]{figs/MEAN_FRONT_BULB.png}} 
	\caption{Example of an anomalous geometry but geometrically feasible from DIRECT using PCA subspace in (a). The mean geometry $\bar{\mathbf{x}}$ in (b).}
\label{fig:comp_direct_ano}
\end{figure}
\begin{figure}[htb!]
	\centering
	\subfigure[$d^2_{M}(\mathbf{x}) = 49.72$]{\label{fig:pca_bgo_rear_ano}\includegraphics[width=0.49\textwidth]{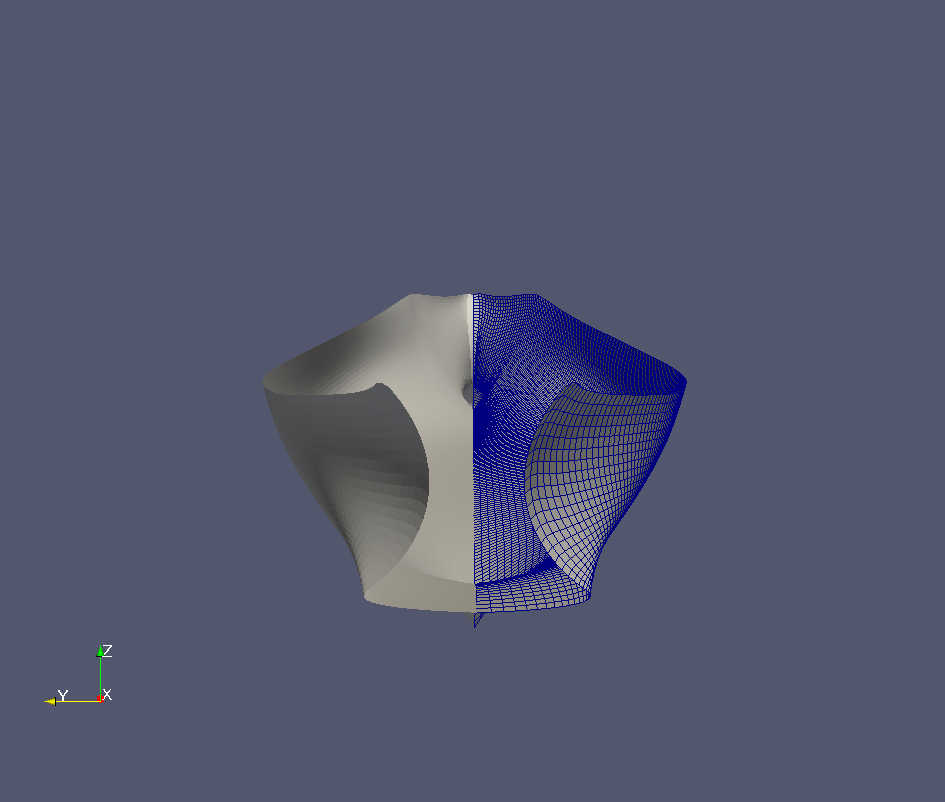}}
	\subfigure[$d^2_{M}(\bar{\mathbf{x}}) = 0$]{\label{fig:mean_rear}\includegraphics[width=0.49\textwidth]{figs/MEAN_REAR.png}} 
	\caption{Example of an anomalous geometry but geometrically feasible from the GP-LCB using PCA subspace in (a). The mean geometry $\bar{\mathbf{x}}$ in (b).}
\label{fig:comp_bgo_ano}
\end{figure}

\section{Conclusion and Future Work}\label{sec:8}
In this paper, we proposed a new framework for SBDO in shape optimization which focuses on the definition of a lower dimensional space to speed up the convergence of global optimization algorithms while promoting only high-quality designs during the process. 
For this purpose, we used and compared two probabilistic linear latent variable models such as PPCA and FA. 
Those methods, identified a lower dimensional space of $K=12$ and $K=16$ for PPCA and FA respectively with a reduction of about $42\%$ and $23\%$ respect to the original dimensionality ($M=21$) while maintaining the $99\%$ of the geometrical variance. 

Through PPCA and FA, we also performed density estimation. 
We showed that when the shape modification method is linear and the design variables are sampled uniformly at random, the generated data follows approximately a Gaussian distribution. This is a consequence of a direct application of the central limit theorem (CLT). For this reason, PPCA and FA are demonstrated to be ideal for the current application since they both assume that the underlying generative process of the data is modeled by a Gaussian distribution.
The degree of anomalousness is measured in terms of Mahalanobis distance, and the paper demonstrates that abnormal designs tend to exhibit a high value of this metric. 
This enables the definition of a new optimization model where anomalous geometries are penalized and consequently avoided during the optimization loop.

The new proposed framework is demonstrated for the shape optimization of naval military US destroyer. 
The optimization results carried out with two global optimization algorithms DIRECT and Bayesian optimization (GP-LCB), showed that reducing the dimensionality allows a greater reduction in the objective function values and a faster convergence with respect to solving the original problem in the full-dimensionality space. Furthermore, we observed high-quality designs with the absence of anomalous curvatures and deflections across all the geometries generated using our new framework.
We also performed the same optimization process using simple PCA, showing that a large number of simulations are wasted for geometries with abnormal geometric characteristics.

Our future work will concentrate on using our framework in different real-world applications, such as airfoil design. We'll also compare its performance with other dimensionality reduction models, both unsupervised \cite{chen2020airfoil, serani2023parametric} and supervised \cite{d2023learning, lukaczyk2014-AIAA}.
\section*{Acknowledgement}
The work of Danny D'Agostino was partially developed in \cite{d2021lipschitzian} while performing his Ph.D. at the Sapienza University of Rome and as a member of the multidisciplinary optimization group at the Institute of Marine Engineering of the Italian National Research Council CNR-INM.

\bibliographystyle{plain}
\bibliography{biblio}  %

\end{document}